\newcommand\ci{\perp\!\!\!\perp}
\DeclareMathOperator*{\argmin}{arg\,min}
\newtheorem*{rep@theorem}{\rep@title}
\newcommand{\newreptheorem}[2]{%
\newenvironment{rep#1}[1]{%
 \def\rep@title{#2 \ref{##1}}%
 \begin{rep@theorem}}%
 {\end{rep@theorem}}}
\newtheorem{theorem}{Theorem}
\newtheorem{lemma}{Lemma}
\newtheorem{definition}{Definition}
	\definecolor{burntorange}{rgb}{0.8, 0.33, 0.0}
\let\oldnl\nl
\newcommand{\nonl}{\renewcommand{\nl}{\let\nl\oldnl}}
\begin{document}
\title{Identifying Patient-Specific Root Causes\\with the Heteroscedastic Noise Model}
\author{Eric V. Strobl, Thomas A. Lasko}

\maketitle

\begin{abstract}
Complex diseases are caused by a multitude of factors that may differ between patients even within the same diagnostic category. A few underlying \textit{root causes} may nevertheless initiate the development of disease within each patient. We therefore focus on identifying \textit{patient-specific} root causes of disease, which we equate to the sample-specific predictivity of the exogenous error terms in a structural equation model. We generalize from the linear setting to the \textit{heteroscedastic noise model} where $Y = m(X) + \varepsilon\sigma(X)$ with non-linear functions $m(X)$ and $\sigma(X)$ representing the conditional mean and mean absolute deviation, respectively. This model preserves identifiability but introduces non-trivial challenges that require a customized algorithm called Generalized Root Causal Inference (GRCI) to extract the error terms correctly. GRCI recovers patient-specific root causes more accurately than existing alternatives. 
\end{abstract}

\begin{IEEEkeywords}
Causal inference, functional causal model, heteroscedastic noise, root cause
\end{IEEEkeywords}

\section{Introduction}
\label{sec:introduction}
\IEEEPARstart{C}{ausal} inference refers to the process of inferring causal relationships from data. Randomized controlled trials (RCTs) remain the gold standard for causal inference in most fields of science. However, RCTs cannot distinguish between causes and \textit{root causes} of disease, or the initial perturbations to a biological system that ultimately induce a diagnostic label as a downstream effect; we will clarify this definition in Section \ref{sec_root}. Randomization also introduces a myriad of ethical, financial and logistical issues -- such as withholding potentially lifesaving treatments from patients. We therefore instead focus on identifying root causes from \textit{observational data}, where patients are not subject to randomization.

Consider for example the causal process depicted by the directed graph in Figure \ref{fig_root_cause}, where nodes represent random variables and directed edges their direct causal relations. The blue lightning bolt depicts an exogenous ``shock'' to the causal process, such as the effect of a somatic mutation or a virus on the expression level of a gene $X_2 \in \bm{X}$. The shock is felt by downstream genes $X_3,X_4$ ultimately generating symptoms $X_5, X_6$ and then causing a clinician to label a patient with a diagnosis $D$ based on the symptoms. We focus on identifying $X_2$ from data because it corresponds to the initial perturbation and therefore the root cause. The problem is challenging because the root cause may lie arbitrarily far from $D$, and we must differentiate it from the other variables in $\bm{X}$ that may be causes but not necessarily the root cause of the diagnosis.

The problem is further complicated by the existence of complex diseases that may have multiple root causes differing between patients even within the same diagnostic category. As a result, simply identifying the root causes of all patients with the same diagnosis can lead to many statistically significant variables with clinically insignificant effect sizes. We instead focus our efforts on identifying \textit{patient-specific} root causes in order to make complex diseases more tractable.

We identify patient-specific root causes by first defining a causal process using a structural equation model (SEM), where variables are related by a series of deterministic equations and stochastic error terms. Patient-specific root causes then correspond to the predictivity of the exogenous errors as assessed by Shapley values (see Section \ref{sec_root} for details). Obtaining these exogenous errors requires invertible SEMs, so that we can recover the error term values uniquely from the observed variables. For example, the linear non-Gaussian acyclic model (LiNGAM) is an invertible SEM with linear equations and non-Gaussian error terms \cite{Shimizu06}. Authors have thus far only utilized LiNGAM to recover the error terms and infer patient-specific root causes of disease \cite{Strobl22}. 

Real datasets however frequently contain non-linear relations, and running linear algorithms on data sampled from a non-linear SEM can lead to large errors in estimation. Investigators have introduced several approaches towards handling non-linear relations. The additive noise model (ANM) considers $Y=m(X) + \varepsilon$, where $m(X)$ denotes a possibly non-linear function and the conditional variance of $\varepsilon$ does not depend on $X$ \cite{Hoyer08}. The post-nonlinear model (PNL) extends ANM by introducing an outer invertible transformation $h$ so that $Y=h(m(X) + \varepsilon)$ and the conditional variance of $\varepsilon$ may either monotonically increase or decrease as a function of $X$ \cite{Zhang09}. Existing methods therefore impose strong restrictions on the conditional variance of the error term.

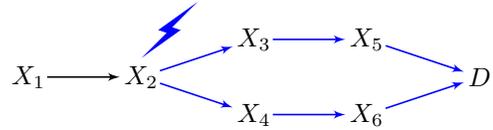
\begin{figure}
\centering
\begin{tikzpicture}[scale=1.0, shorten >=1pt,auto,node distance=2.8cm, semithick,
  inj/.pic = {\draw (0,0) -- ++ (0,2mm) 
                node[minimum size=2mm, fill=red!60,above] {}
                node[draw, semithick, minimum width=2mm, minimum height=5mm,above] (aux) {};
              \draw[thick] (aux.west) -- (aux.east); 
              \draw[thick,{Bar[width=2mm]}-{Hooks[width=4mm]}] (aux.center) -- ++ (0,4mm) coordinate (-inj);
              }]
                    
\tikzset{vertex/.style = {inner sep=0.4pt}}
\tikzset{edge/.style = {->,> = latex'}}
 
\node[vertex] (1) at  (0,0) {$X_1$};
\node[vertex] (2) at  (1.5,0) {$X_2$};
\node[vertex] (3) at  (3,0.5) {$X_3$};
\node[vertex] (4) at  (3,-0.5) {$X_4$};
\node[vertex] (5) at  (4.5,0.5) {$X_5$};
\node[vertex] (6) at  (4.5,-0.5) {$X_6$};
\node[vertex] (7) at  (6,0) {$D$};

\fill [blue, decoration=lightning bolt, decorate] (1.5,0.25) -- ++ (0.75,0.75);

\draw[edge] (1) to (2);
\draw[edge,blue] (2) to (3);
\draw[edge,blue] (2) to (4);
\draw[edge,blue] (3) to (5);
\draw[edge,blue] (4) to (6);
\draw[edge,blue] (5) to (7);
\draw[edge,blue] (6) to (7);
\end{tikzpicture}
\caption{Intuitive illustration of the difference between a cause and patient-specific root cause.} \label{fig_root_cause}
\end{figure}

\begin{tcolorbox}[breakable,enhanced,frame hidden]
In this paper, we infer patient-specific root causes while allowing the variance of the error term to change arbitrarily as a function of $X$ via multiple innovations:
\begin{enumerate}[leftmargin=*,label=(\arabic*)]
    \item We consider the \textit{heteroscedastic noise model} (HNM) given by $Y = m(X) + \varepsilon \sigma(X)$ with arbitrary non-linear functions $m$ and $\sigma$ representing the conditional mean and conditional mean absolute deviation (MAD), respectively  (Section \ref{sec_HNM}).
    \item We prove identifiability of the full causal graph under HNM (Section \ref{sec_identify}) and introduce a cross-validation procedure to efficiently extract the error terms of HNM (Section \ref{sec_HNM:extract}).
    \item We quantify root causal contributions using Shapley values based on conditional distributions on the error terms; these values accommodate noisy labels, fast computation and differing prevalence rates without requiring additional background knowledge (Section \ref{sec_root:def}).
    \item We introduce an algorithm called Generalized Root Causal Inference (GRCI) that efficiently extracts the error terms of an SEM satisfying HNM using spline functions and quickly computes the proposed Shapley values all without access to the underlying causal graph (Section \ref{sec_GRCI}).
    \end{enumerate}

\end{tcolorbox}
\noindent Experiments highlight considerable improvements in accuracy compared to prior methods because GRCI correctly identifies the exogenous errors by flexibly accounting for nonlinear causal relations. We emphasize that, while this paper focuses on automatically identifying patient-specific root causes of disease -- a very important biomedical problem -- the results derived for HNM apply more broadly to other areas of causal inference.

\section{Background}
\subsection{Definitions}
 We define a causal process using a \textit{structural equation model} (SEM), or a series of equations in the form:
\begin{equation} \label{eq_SEM_gen}
    Z_i = f_i(\textnormal{Pa}(Z_i),E_i), \hspace{2mm}\forall Z_i \in \bm{Z},
\end{equation}
where $\textnormal{Pa}(Z_i)$ denotes the \textit{parents}, or direct causes, of $Z_i$. The set $\bm{E}$ contains mutually independent error terms. We assume $\mathbb{E}(\bm{E}) = 0$ without loss of generality. A linear SEM admits the more specific form:
\begin{equation} \label{eq_SEM_linear}
    Z_i = \textnormal{Pa}(Z_i) \beta_{\textnormal{Pa}(Z_i)Z_i} + E_i, \hspace{2mm}\forall Z_i \in \bm{Z},
\end{equation}
where $\beta$ denotes a matrix of coefficients. An SEM is \textit{invertible} if we can recover values of $\bm{E}$ uniquely from the values of $\bm{X}$.

 A \textit{directed graph} $\mathbb{G}$ is a graph with a directed edge $\rightarrow$ or $\leftarrow$ between any two vertices in $\bm{Z}$. We have $Z_i \rightarrow Z_j$ in $\mathbb{G}$ if $Z_i \in \textnormal{Pa}(Z_j)$ or, equivalently, $Z_j$ is a \textit{child} or direct effect of $Z_i$: $Z_j \in \textnormal{Ch}(Z_i)$. The \textit{neighbors} of $Z_i$ unify the parents and children: $\textnormal{Ne}(Z_i) = \textnormal{Pa}(Z_i) \cup \textnormal{Ch}(Z_i)$. A \textit{sink node} is a vertex without children. A \textit{directed path} from $Z_i$ to $Z_j$ refers to a sequence of adjacent directed edges from $Z_i$ to $Z_j$. $Z_i$ is an \textit{ancestor} of $Z_j$, denoted by $Z_i \in \textnormal{Anc}(Z_j)$, when there exists a directed path from $Z_i$ to $Z_j$; we likewise say $Z_j$ is a \textit{descendant} of $Z_i$. The set $\textnormal{Nd}(Z_i)$ corresponds to the non-descendants of $Z_i$. A \textit{cycle} occurs when $Z_i \in \textnormal{Anc}(Z_j)$, and we have $Z_j \rightarrow Z_i$. A directed graph is called a \textit{directed acylic graph} (DAG), if it does not contain cycles. An \textit{augmented graph} $\mathbb{G}^\prime$ is a DAG over $\bm{Z} \cup \bm{E}$ such that $E_i \in \textnormal{Pa}_{\mathbb{G}^\prime}(Z_i)$ and $\textnormal{Pa}_{\mathbb{G}^\prime}(E_i) = \emptyset$ for all $E_i \in \bm{E}$. We provide an example of a directed graph in Figure \ref{fig_root_cause} and its corresponding augmented graph in Figure \ref{fig_graphs}.
 
 The triple $\langle Z_i, Z_j, Z_k \rangle$ forms a \textit{collider} in $\mathbb{G}$, if we have $Z_i \rightarrow Z_j \leftarrow Z_k$, and $Z_i$ and $Z_k$ are non-adjacent. $Z_i$ and $Z_j$ are \textit{d-connected} given $\bm{W}  \subseteq \bm{Z} \setminus \{Z_i, Z_j\}$ if there exists a path between the two vertices such that any collider on the path is an ancestor of $\bm{W}$ and no non-collider on the path is in $\bm{W}$. Otherwise, $Z_i$ and $Z_j$ are \textit{d-separated} given $\bm{W}$. 
 
A density $p(\bm{Z})$ associated with a DAG $\mathbb{G}$ factorizes according to the product of the conditional densities of each variable in $\bm{Z}$ given its parents:
\begin{equation} \nonumber
p(\bm{Z})=\prod_{i=1}^{p} p(Z_i | \textnormal{Pa}(Z_i)).
\end{equation}
Any distribution which factorizes according to the above equation also satisfies the \textit{global Markov property} where d-separation between $Z_i$ and $Z_j$ given $\bm{W}$ in $\mathbb{G}$ implies conditional independence (CI) between $Z_i$ and $Z_j$ given $\bm{W}$ \cite{Lauritzen90}. We refer to the converse as \textit{d-separation faithfulness}, where CI implies d-separation. The density $p(\bm{X})$ is \textit{causally minimal} if no proper subset of $\mathbb{G}$ also obeys the global Markov property. D-separation faithfulness implies causal minimality \cite{Peters14}.

The \textit{Kolmogorov complexity} of a finite binary string $x$, denoted by $K(x)$, is the length of the shortest self-delimiting binary program that generates $x$ on a universal Turing machine and then halts. The universal Turing machine is not unique, but the Kolmogorov complexity between any two such machines only differs by at most a constant. Most equalities and inequalities in algorithmic information theory are therefore only understood up to a constant; the notation $\stackrel{+}{=}$ means equality up to a constant and likewise $\stackrel{+}{\leq}$ for inequality.

 To prevent cluttering of notation with too many parentheses, we write $p(Y)$ as $p_Y$ when referring to the entire density. We keep the standard notation $p(y) = p(Y=y)$ when referring to a specific value of the density.

 \subsection{Related Work}
Authors have proposed to identify causal direction using functional forms more restrictive than HNM. For example, LiNGAM considers a linear SEM with non-Gaussian errors, while the additive noise model (ANM) given by $Y=m(X) + \varepsilon$ considers a nonlinear SEM with additive noise \cite{Shimizu06,Hoyer08}. The post-nonlinear model (PNL) assumes that the error can be made homoscedastic under a monotonic transformation of the response \cite{Zhang09}. All of these models therefore only consider additive errors, whereas HNM allows both additive and multiplicative forms.

Recently, \cite{Xu22} also considered HNM and proposed an algorithm called HEC for determining causal direction in the bivariate case. HEC divides the range of the predictor variable into a finite set of bins and then fits an additive model in each bin. The authors additionally assume that the error terms follow a Gaussian distribution in order to optimize the number of bins using the BIC score. Another algorithm called Fourth Order Moment (FOM) assumes approximately Gaussian errors but allows the conditional variance to change in a smooth, rather than in a piece-wise, fashion \cite{Cai20}. GRCI in contrast admits a smooth conditional variance \textit{and} allows the error term to admit an arbitrary, potentially non-Gaussian distribution.

Other methods, such as those proposed in \cite{Tagasovska20,Mitrovic18,Liu17}, also allow heteroscedastic noise but  determine causal direction \textit{without} recovering the error terms. We therefore cannot use these algorithms to compute the Shapley values necessary for identifying patient-specific root causes of disease.

A third set of algorithms attempt to identify root causes rather than just determine causal direction. The RCI algorithm for example identifies patient-specific root causes of disease but assumes LiNGAM \cite{Strobl22}. Unfortunately, we cannot simply substitute LiNGAM with HNM in RCI because \textit{indirect} causal relations may not follow HNM -- i.e., HNM is not closed under marginalizatin. Other authors defined patient-specific root causes as conditional outliers, but not all root causes are outliers and not all outliers induce disease \cite{Janzing19}. We therefore instead define patient-specific root causes using Shapley values based on model predictivity. A third algorithm identifies root causes by quantifying changes in the marginal distribution of $D$ after substituting certain causal conditionals into an SEM, but this method struggles to scale beyond several variables and identifies root causes at the population level rather than at the desired patient-specific level \cite{Budhathoki21}. The root causes of complex diseases likely differ dramatically between patients, so we must identify \textit{patient-specific} root causes in order to make complex diseases tractable.

Both \cite{Janzing19,Budhathoki21} as well as the recent paper \cite{Budhathoki22} further assume knowledge of the causal graph. The authors carry out all of their experiments with known causal graphs. While the authors mention in passing that we can recover the error terms in invertible models, they do not address the hard problem of estimating the error term values from data without prior knowledge. The authors in \cite{Budhathoki21,Budhathoki22} also only utilize linear Gaussian models in their experiments rather than non-linear ANMs or even LiNGAM. Directly identifying the error term values without knowledge of the causal graph is critical in biomedical applications, where little to no prior knowledge may exist about the underlying causal relations. The investigators of \cite{Strobl22} solve this problem with the RCI algorithm but again only consider LiNGAM. GRCI in contrast recovers the error terms de novo under the flexible HNM class that subsumes both ANM and LiNGAM.

\begin{tcolorbox}[breakable,enhanced,frame hidden]
In summary, GRCI improves upon previous work because it:
\begin{enumerate}[leftmargin=*,label=(\arabic*)]
    \item directly recovers the error terms without prior knowledge of the causal graph;
    \item adopts the identifiable heteroscedastic noise model which includes both LiNGAM and ANM as special cases;
    \item generalizes RCI to models that are not closed under marginalization, such as HNM and ANM;
    \item identifies root causes at the patient-specific level in order to make complex diseases tractable.
\end{enumerate}
\end{tcolorbox}

\section{The Heteroscedastic Noise Model} \label{sec_HNM}

\subsection{Definition}
 We set $\bm{Z} = \bm{X} \cup D$, where $D$ denotes a binary diagnostic label. We will have more to say about $D$ in Section \ref{sec_root} and focus on $\bm{X}$ for now. We can generalize the linear SEM in Equation \eqref{eq_SEM_linear} to an HNM SEM as follows:
\begin{definition} (Heteroscedastic noise model) \label{assump_HNM}
An SEM obeys the heteroscedastic noise model (HNM) if the following holds for each $X_i \in \bm{X}$:
\begin{equation} \label{eq_HNM}
    X_i = m_i(\textnormal{Pa}(X_i)) + E_i  \sigma_i(\textnormal{Pa}(X_i)),
\end{equation}
for non-linear functions $m_i$ and $\sigma_i > 0$.
\end{definition}
\noindent We assume that $\mathbb{E}\big(\bm{E}\big) = 0$ and $\mathbb{E}\big(|\bm{E}|\big) = 1$ without loss of generality. HNM thus generalizes the linear SEM in Equation \eqref{eq_SEM_linear} by allowing the expectation and MAD (of the mean) to change as arbitrary non-linear functions of the parents. Further, HNM includes ANM as a special case where $\sigma_i$ is a constant. 

\begin{figure}
    \centering
    \includegraphics[scale=0.6]{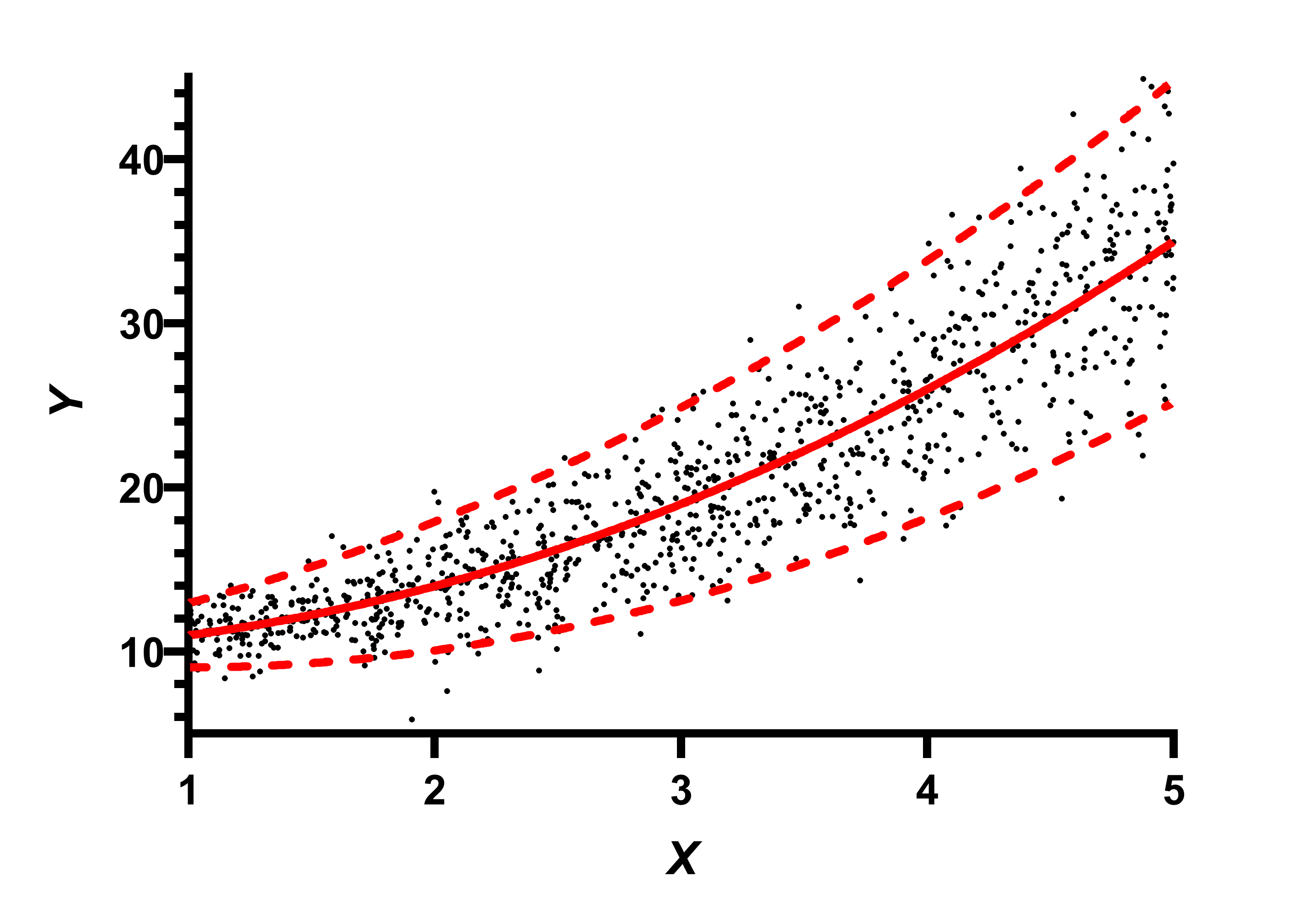}
    \caption{Example of an HNM with $Y=X^2 + EX$ plus 10.}
    \label{fig:HNM}
\end{figure}

Consider for example the bivariate HNM in Figure \ref{fig:HNM}. The conditional expectation in solid red and conditional MAD in dashed red (at 95\% prediction intervals) change as functions of $X$. In contrast, the best linear SEM erroneously fits a linear conditional expectation and assumes a constant variance. HNM thus increases modeling flexibility considerably.

Investigators have however proposed other models that generalize ANM in the literature. The post non-linear model (PNL) for example considers an invertible non-linear outer transformation $h_i$ such that $X_i = h_i(m_i(\textnormal{Pa}(X_i)) + E_i)$ for each $X_i \in \bm{X}$ \cite{Zhang09}. The conditional MAD depends on the conditional expectation in PNL, whereas HNM allows arbitrary changes of the MAD. Furthermore, optimizing PNL models requires non-convex procedures that frequently get stuck in local optima and overfit in practice \cite{Zhang09,Uemura20,Breiman85,Zhang15,Keropyan23}. Recovering the error terms in HNM in contrast involves solving two least squares problems as detailed in Section \ref{sec_HNM:extract}. HNM therefore offers additional control over the conditional MAD and admits easier learning procedures.

\subsection{Identifiability} \label{sec_identify}

The increased flexibility of accounting for heteroscedastic noise fortunately preserves \textit{identifiability} of the model, or the ability to pinpoint the exact DAG when given the joint distribution.

We assume strictly positive densities throughout. We first have the following result in the bivariate case:
\begin{theorem} \label{thm_DE}
Assume that:
\begin{itemize}
    \item the forward model $X \rightarrow Y$ obeys HNM so that $p(x,y) = p\big(\frac{y-m(x)}{\sigma(x)}\big) p(x)$ with $m(X)$ and $\sigma(X)$ once differentiable;
    \item there is a backward model $Y \rightarrow X$ also obeying HNM so that $p(x,y) = p\big(\frac{x-n(y)}{t(y)}\big) p(y)$.
\end{itemize}
Then the following differential equation holds:
\begin{equation} \label{eq_DE}
\begin{aligned}
     &-\frac{\sigma(x)}{Q(x,y)}\frac{\partial ^2}{\partial x \partial y}r(x,y)-\frac{\partial ^2}{\partial y^2}r(x,y) -\\ &\frac{\sigma^\prime(x)}{Q(x,y)}\frac{\partial }{\partial y}r(x,y)  = q^{\prime \prime}(y) + \frac{\sigma^\prime(x)}{Q(x,y)} q^\prime(y),
    \end{aligned}
\end{equation} 
where:
\begin{itemize}
\item $r(x,y) = \textnormal{log } p\big(\frac{x-n(y)}{t(y)}\big)$ and $ q(y) = \textnormal{log } p(y)$ both twice differentiable;
\item $Q(x,y) = \sigma(x)m^\prime(x) + (y-m(x))\sigma^\prime(x)$.
\end{itemize}
Moreover, if there exists a quadruple $(x_0,m(x_0),\sigma(x_0),p(x_0|y))$ such that $Q(x_0,y) \not = 0$ for all but countably many $y$, then $p_Y$ is completely determined by $(y_0, q^\prime(y_0))$ -- i.e., the set of all $p_Y$ satisfying the differential equation is contained in a two dimensional affine space.
\end{theorem}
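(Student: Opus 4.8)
The plan is to follow the standard template for establishing identifiability of additive and post-nonlinear noise models: convert the two competing factorizations of $p(x,y)$ into one functional identity, differentiate enough times to eliminate the unknown noise densities, and then recognize what survives as a linear ODE in $q=\log p_Y$ whose solution space is finite-dimensional.

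First I would equate the forward and backward factorizations supplied by the two hypotheses, take logarithms (legitimate since all densities are strictly positive), and write the resulting identity as
\[
 g\!\left(\tfrac{y-m(x)}{\sigma(x)}\right) + \log p_X(x) \;=\; r(x,y) + q(y),
\]
where $g$ denotes the logarithm of the univariate function appearing in the forward factorization and $r,q$ are as in the statement. Abbreviating the forward residual by $u:=\tfrac{y-m(x)}{\sigma(x)}$, we have $\partial_y u = 1/\sigma(x)$ and $\partial_x u = -Q(x,y)/\sigma(x)^2$. Differentiating the identity once with respect to $y$ removes the $\log p_X$ term and gives $g'(u)/\sigma(x)=\partial_y r + q'(y)$; differentiating this once more in $y$ gives $g''(u)/\sigma(x)^2=\partial_y^2 r + q''(y)$. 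These two relations express $g'(u)$ and $g''(u)$ entirely in terms of $\sigma$ and the $y$-derivatives of $r$ and $q$.

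Next I would differentiate $g'(u)/\sigma(x)=\partial_y r + q'(y)$ with respect to $x$. Using $\partial_x u$ together with $\partial_x\big(1/\sigma(x)\big)=-\sigma'(x)/\sigma(x)^2$, the left-hand side becomes $-g''(u)\,Q(x,y)/\sigma(x)^3 - g'(u)\,\sigma'(x)/\sigma(x)^2$, while the right-hand side becomes $\partial_x\partial_y r$ since $q'(y)$ has no $x$-dependence. Substituting the expressions for $g'(u)$ and $g''(u)$ found above, multiplying through by $-\sigma(x)/Q(x,y)$ (valid wherever $Q(x,y)\neq 0$), and collecting terms reproduces exactly \eqref{eq_DE}. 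Carrying out this substitution and rearrangement carefully — in particular propagating the chain-rule factor $\partial_x u = -Q/\sigma^2$ through every term — is the main computational step, although each individual manipulation is elementary.

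For the last assertion, fix a value $x=x_0$ as furnished by the quadruple, so that $Q(x_0,y)\neq 0$ for every $y$ outside a countable set $N$. Reading \eqref{eq_DE} at $x=x_0$ as an equation in the single unknown function $q$ of $y$ gives
\[
 q''(y) + \frac{\sigma'(x_0)}{Q(x_0,y)}\,q'(y) \;=\; b(y),
\]
a second-order linear inhomogeneous ODE whose coefficient $\sigma'(x_0)/Q(x_0,y)$ and forcing term $b$ are determined by the forward model and the data in the quadruple. On each connected component of $\mathbb{R}\setminus N$ this ODE is nondegenerate, so its solution set is one particular solution plus the two-dimensional homogeneous space spanned by the constant $1$ and the antiderivative $\int \exp\big(-\int \sigma'(x_0)/Q(x_0,\cdot)\big)$; there $q$, and hence $p_Y = e^{q}$, is determined by two scalars, which may be taken as $\big(q(y_0),q'(y_0)\big)$ at a reference point $y_0$ or — absorbing the free additive constant into the normalization $\int p_Y = 1$ — as $\big(y_0,q'(y_0)\big)$, matching the statement. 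The step I expect to be the main obstacle is upgrading this component-wise description to the global claim: because $N$ is countable, hence nowhere dense, and $q$ and $q'$ are continuous, the two integration constants cannot jump across $N$, so the full family of admissible $p_Y$ lies inside a single two-dimensional affine space.
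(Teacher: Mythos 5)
Your proposal is correct and follows essentially the same route as the paper: both derivations differentiate the logged forward/backward factorizations of $p(x,y)$ (twice in $y$, once in $y$ and $x$) to eliminate the forward noise density via the chain-rule factor $\partial_x u = -Q/\sigma^2$, and both then read Equation \eqref{eq_DE} at $x=x_0$ as a linear ODE whose solution space is pinned down by $(y_0,q'(y_0))$ plus the normalization of $p_Y$ (the paper phrases this as a first-order equation in $z=q'$, you as the equivalent second-order equation in $q$). Your added remark about gluing the integration constants across the countable exceptional set is a small point of care that the paper glosses over.
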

\noindent We delegate the longer proofs to the Supplementary Materials.

Equation \eqref{eq_DE} expresses a very specific relationship and suggests that finding a backward model satisfying the relation is like finding a needle in the haystack; we will almost never encounter this needle in practice. The statement that $p_Y$ lies in a two dimensional space formalizes this intuition. It implies that the forward model cannot be inverted in general because the space of all possible $p_Y$ is infinite dimensional \textit{a priori}.

We recover the differential equation $-\frac{\sigma(x)}{Q(x,y)}\frac{\partial ^2}{\partial x \partial y}r(x,y)-\frac{\partial ^2}{\partial y^2}r(x,y)  = q^{\prime \prime}(y)$ in the special case of an additive noise model (ANM) with $\sigma(x)$ a constant and $\sigma^\prime(x) = 0$ -- thus replicating Lemma 1 in \cite{Janzing10_2}. We can see that this relation holds when $p_{XY}$ is Gaussian, a well-known case where we \textit{cannot} identify the causal direction. We can of course just work out the equations with $Y=X\beta + \varepsilon_Y$: $-\frac{\sigma(x)}{Q(x,y)}\frac{\partial ^2}{\partial x \partial y}r(x,y) \stackrel{+}{=} -\frac{1}{\beta}\frac{\beta}{\sigma^2_Y} = -\frac{1}{\sigma^2_Y}$, $\frac{\partial ^2}{\partial y^2}r(x,y) \stackrel{+}{=}  -\frac{1}{\sigma^2_Y}$ and $q^{\prime \prime}(y) = \frac{2}{\sigma^2_Y}$, so that Equation \eqref{eq_DE} holds in the Gaussian case. But more intuitively, Theorem 1 says that, if we are given information about $X$ in terms of $(\textcolor{blue}{x_0,x_0\beta},\textcolor{red}{\sigma_X},p(x_0|y))$, then we can recover $p_Y$ with two points $(\textcolor{Green}{y_0, q^\prime(y_0)})$ when HNM holds in both directions. This of course holds in the Gaussian case because we can recover the entire (centered) bivariate density by only knowing $(\textcolor{blue}{\beta},\textcolor{red}{\sigma_X}, \textcolor{Green}{\sigma_Y})$.

The fact that $p_Y$ is completely determined by just two parameters of $Y$ when both directions hold suggests that $p_{X|Y}$ provides a substantial amount of information about $p_Y$. This conflicts with past work postulating that nature implements an \textit{independence of causal mechanisms}, whereby $p_{X|Y}$ provides almost no information about $p_Y$ \cite{Janzing10,Janzing10_2}. Authors rigorously define this information as follows:
\begin{definition} (Algorithmic mutual information)
Let $s$ and $t$ denote two binary strings. The algorithmic mutual information between $s$ and $t$ is:
\begin{equation} \nonumber
 I(s:t) = K(t) - K(t|s^\star),
\end{equation}
where $s^\star$ denotes the shortest program that computes $s$.
\end{definition}
\noindent If $p_{X|Y}$ provides information about $p_Y$, then $K(p_Y|p^\star_{X|Y})$ is small, so we expect $I(p_Y:p_{X|Y}) \gg 0$. 

The two parameter conclusion from Theorem \ref{thm_DE} implies that $I(p_Y:p_{X|Y}) \gg 0$ in general under HNM. We can alternatively interpret Theorem \ref{thm_DE} as follows: we must choose $p_Y$ in a contrived fashion once we know $p_{X|Y}$, so that Equation \eqref{eq_DE} holds. The following theorem formalizes this intuition by showing that the complexity of $p_Y$ indeed lower bounds $I(p_Y:p_{X|Y})$; in other words, if $I(p_Y:p_{X|Y}) \gg 0$, then $p_Y$ is likely complex.
\begin{theorem} \label{thm:k_py}
Consider the same assumptions as Theorem \ref{thm_DE}. If both the forward and backward models follow HNM, then we have:
\begin{equation} \nonumber \label{eq_alg_MI}
\begin{aligned}
       &I(p_Y:p_{X|Y})\\ &\stackrel{+}{\geq} K(p_Y) - \inf_{(x_0,y_0)} K(x_0,m(x_0),\sigma(x_0),y_0,q^\prime(y_0)), 
\end{aligned}
\end{equation}
assuming of course that all inputs are computable.
\end{theorem}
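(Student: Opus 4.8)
\emph{Proof idea.} Unpacking the definition of algorithmic mutual information, $I(p_Y:p_{X|Y}) = K(p_Y) - K(p_Y\mid p^\star_{X|Y})$, so it is enough to prove
\[
K(p_Y\mid p^\star_{X|Y})\ \stackrel{+}{\leq}\ \inf_{(x_0,y_0)}K\big(x_0,m(x_0),\sigma(x_0),y_0,q'(y_0)\big),
\]
where the infimum runs over the pairs meeting the regularity condition of Theorem~\ref{thm_DE} ($Q(x_0,y)\ne 0$ for all but countably many $y$), a set that is nonempty under the hypotheses inherited from that theorem. The plan is: for each admissible pair, exhibit a fixed program, of size independent of the model, that reconstructs $p_Y$ from $p^\star_{X|Y}$ and the short certificate $w=\langle x_0,m(x_0),\sigma(x_0),y_0,q'(y_0)\rangle$, and then read off the inequality by counting description lengths.

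The reconstruction is just Theorem~\ref{thm_DE} made operational. From $p^\star_{X|Y}$ one decodes $p_{X|Y}$ and computes the backward conditional mean $n$, the backward MAD $t$, the backward noise density, and hence $r(x,y)=\log p\big(\frac{x-n(y)}{t(y)}\big)$ together with the partial derivatives $\partial_y r$, $\partial_y^2 r$, $\partial_x\partial_y r$ that appear in \eqref{eq_DE}. Supplying $x_0,m(x_0),\sigma(x_0)$ then fixes the coefficients of \eqref{eq_DE} at $x=x_0$ and turns it into a linear second-order ODE in $y$ for $q$; we integrate this ODE as an initial value problem with slope $q'(y_0)$ at $y_0$, extending the solution continuously over the countably many $y$ with $Q(x_0,y)=0$, and then exponentiate and rescale to a probability density, the rescaling fixing the remaining additive constant via $\int e^{q(y)}\,dy = 1$. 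The output is $p_Y$: by the ``Moreover'' part of Theorem~\ref{thm_DE} the solutions of \eqref{eq_DE} lie in a two-dimensional affine space, and $p_Y$ is its unique normalized member with $(\log p_Y)'(y_0)=q'(y_0)$.

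Counting description lengths now finishes the argument: since the program has size $O(1)$ uniformly in the admissible pair, running the shortest program for $w$ given $p^\star_{X|Y}$ and then this program gives $K(p_Y\mid p^\star_{X|Y})\stackrel{+}{\leq} K(w\mid p^\star_{X|Y})\stackrel{+}{\leq} K(w)$; taking the infimum over $(x_0,y_0)$ and substituting into $I(p_Y:p_{X|Y})=K(p_Y)-K(p_Y\mid p^\star_{X|Y})$ yields the stated bound. I expect the genuine difficulty to be \emph{effectivity} -- upgrading the existential content of Theorem~\ref{thm_DE} to an honest algorithm. One must check that from computable inputs the map above is computable to arbitrary precision: that $n,t$, the relevant derivatives of $r$, and \emph{all} coefficients of \eqref{eq_DE} at $x_0$ -- in particular the local derivatives $m'(x_0),\sigma'(x_0)$ inside $Q(x_0,\cdot)$, which must be obtained from the foregoing data -- are effectively recoverable, that the ODE can be numerically integrated, and that the zeros of $Q(x_0,\cdot)$ are removable rather than obstructive. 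This is precisely where the twice-differentiability assumptions and the ``all inputs computable'' caveat of the theorem do their work; the Kolmogorov-complexity manipulations themselves are only the standard invariance and sub-additivity inequalities.
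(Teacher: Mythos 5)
Your proposal is correct and follows essentially the same route as the paper: both arguments use the solved linear ODE from Theorem~\ref{thm_DE} to show that $p_Y$ is reconstructible from $p_{X|Y}$ together with the quintuple $(x_0,m(x_0),\sigma(x_0),y_0,q^\prime(y_0))$, bound $K(p_Y\mid p^\star_{X|Y})$ by the (conditional, hence unconditional) complexity of that quintuple, take the infimum, and substitute into the definition of algorithmic mutual information. Your added discussion of effectivity (and your restriction of the infimum to pairs with $Q(x_0,\cdot)\neq 0$ off a countable set) is more careful than the paper, which disposes of these points with the blanket ``all inputs are computable'' caveat, but the underlying argument is the same.
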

The above theorem suggests that $p_Y$ likely has high Kolmogorov complexity because $I(p_Y:p_{X|Y}) \gg 0$. This conclusion also dovetails nicely with complexity based approaches which posit that $K(p_X) + K(p_{Y|X}) \stackrel{+}{\leq} K(p_Y) + K(p_{X|Y})$ when $X \rightarrow Y$ \cite{Janzing10,Stegle10}. If both the forward and backward directions admit HNM, then the inequality is still likely to hold because $K(p_Y)$ is large. Finally, Theorem \ref{thm:k_py} connects with the main idea of the Information Geometric Causal Inference (IGCI) algorithm, where we can determine the causal direction $X \rightarrow Y$ when we can replace $p_X$ with a simple density, such as the uniform or Gaussian density, but preserve the correlation between $p_X$ and an arbitrary property of $p_{Y|X}$ \cite{Janzing12,Janzing15}. GRCI will go a step further than IGCI by determining both causal direction \textit{and} the values of the error terms in order to compute patient-specific statistics.

GRCI will in particular extract the values of \textit{all} of the error terms by partialing out the parents of each variable in $\bm{X}$. The algorithm thus requires identifiability of the entire causal graph $\mathbb{G}$, but Theorem \ref{thm_DE} only applies to the bivariate case. We can fortunately extend Theorem \ref{thm_DE} to the multivariate setting by considering the following definition:
\begin{definition} (Restricted HNM) Equation \eqref{eq_HNM} is a restricted HNM if, for all $Y \in \bm{X}$, $X \in \textnormal{Pa}(Y)$ and $\bm{S}$ such that $(\textnormal{Pa}(Y) \setminus X) \subseteq \bm{S} \subseteq (\textnormal{Nd}(Y) \setminus X)$, there exists $\bm{S} = \bm{s}$ where $p(\bm{s})>0$
and $p(x,y|\bm{s})$ do \underline{not} satisfy Equation \eqref{eq_DE}.
\end{definition}
\noindent In other words, Equation \eqref{eq_DE} does not hold when we condition on some subset of the non-descendants of $Y$ not including a member of $\textnormal{Pa}(Y)$ -- notice that this is a very weak assumption. Let $\mathcal{G}$ denote the space of all causally minimal DAGs obeying a restricted HNM. We have the following result:
\begin{theorem} \label{thm:full}
Assume Equation \eqref{eq_HNM} is a restricted HNM according to $\mathbb{G}$. Then, $\mathbb{G}$ is uniquely identified from $\mathcal{G}$.
\end{theorem}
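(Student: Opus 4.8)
The plan is to argue by contradiction and reduce the multivariate claim to the bivariate analysis of Theorem~\ref{thm_DE} by conditioning, following the standard template for additive-noise identifiability \cite{Peters14,Hoyer08}. Suppose toward a contradiction that the distribution $p(\bm{X})$ generated by the restricted HNM on $\mathbb{G}$ is also generated by a (causally minimal, restricted) HNM on some $\mathbb{G}' \in \mathcal{G}$ with $\mathbb{G}' \neq \mathbb{G}$. Then $\mathbb{G}$ and $\mathbb{G}'$ are both causally minimal DAGs obeying the global Markov property for $p$, and both obey Equation~\eqref{eq_HNM}. The first step is a purely graph-theoretic lemma about two distinct causally minimal DAGs (in the spirit of \cite{Peters14}): there exist $X,Y\in\bm{X}$ and a set $\bm{S}\subseteq\bm{X}\setminus\{X,Y\}$ with $X\rightarrow Y$ in $\mathbb{G}$, $Y\rightarrow X$ in $\mathbb{G}'$,
\begin{equation} \nonumber
\begin{aligned}
\textnormal{Pa}_{\mathbb{G}}(Y)\setminus X &\subseteq \bm{S} \subseteq \textnormal{Nd}_{\mathbb{G}}(Y)\setminus X, \\
\textnormal{Pa}_{\mathbb{G}'}(X)\setminus Y &\subseteq \bm{S} \subseteq \textnormal{Nd}_{\mathbb{G}'}(X)\setminus Y,
\end{aligned}
\end{equation}
and $X\not\ci Y\mid\bm{S}$ under $p$. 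The natural candidate is $\bm{S}=(\textnormal{Pa}_{\mathbb{G}}(Y)\cup\textnormal{Pa}_{\mathbb{G}'}(X))\setminus\{X,Y\}$, but the edge must be picked carefully (see below).

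Given such $X,Y,\bm{S}$, the restricted HNM assumption on $\mathbb{G}$ applies because $\bm{S}$ is sandwiched between $\textnormal{Pa}_{\mathbb{G}}(Y)\setminus X$ and $\textnormal{Nd}_{\mathbb{G}}(Y)\setminus X$: it yields a value $\bm{S}=\bm{s}$ with $p(\bm{s})>0$ at which the (non-degenerate, thanks to $X\not\ci Y\mid\bm{S}$) conditional density $p(x,y\mid\bm{s})$ does \emph{not} satisfy Equation~\eqref{eq_DE}. I would then observe that $p(x,y\mid\bm{s})$ is itself a forward HNM for $X\rightarrow Y$. Conditioning on $\bm{S}=\bm{s}$ freezes all parents of $Y$ except $X$, so the structural equation for $Y$ collapses to $Y=\widetilde m(X)+E_Y\,\widetilde\sigma(X)$, where $\widetilde m,\widetilde\sigma$ are the restrictions of $m_Y,\sigma_Y$ (hence once differentiable); and since $E_Y$ is independent of its non-descendants in the augmented graph while $X,\bm{S}\subseteq\textnormal{Nd}_{\mathbb{G}}(Y)$, we get $E_Y\ci X\mid\bm{S}=\bm{s}$, so $p(x,y\mid\bm{s})=p\big(\tfrac{y-\widetilde m(x)}{\widetilde\sigma(x)}\,\big|\,\bm{s}\big)\,p(x\mid\bm{s})$ has exactly the forward form in Theorem~\ref{thm_DE}.

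Symmetrically, conditioning the structural equation of $X$ in $\mathbb{G}'$ on $\bm{S}=\bm{s}$ gives $X=\widetilde n(Y)+E'_X\,\widetilde t(Y)$ with $E'_X\ci Y\mid\bm{S}=\bm{s}$, since $\textnormal{Pa}_{\mathbb{G}'}(X)\setminus Y\subseteq\bm{S}\subseteq\textnormal{Nd}_{\mathbb{G}'}(X)$; so $p(x,y\mid\bm{s})$ is also a backward HNM for $Y\rightarrow X$. Thus $p(x,y\mid\bm{s})$ obeys HNM in both directions, and (using the differentiability that we assume for models in $\mathcal{G}$) Theorem~\ref{thm_DE} forces $p(x,y\mid\bm{s})$ to satisfy Equation~\eqref{eq_DE} --- contradicting the choice of $\bm{s}$. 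Therefore no such $\mathbb{G}'$ exists, and $\mathbb{G}$ is the unique element of $\mathcal{G}$ consistent with $p$.

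The hard part will be the graph-theoretic lemma of the first step: extracting a reversed edge $X\rightarrow Y$ / $Y\rightarrow X$ together with a conditioning set $\bm{S}$ that is simultaneously between parents and non-descendants in \emph{both} DAGs while still keeping $X\not\ci Y\mid\bm{S}$. For an arbitrary reversed edge this can fail --- e.g.\ a parent of $X$ in $\mathbb{G}'$ may be a descendant of $Y$ in $\mathbb{G}$ --- so the pair must be chosen extremally, for instance by walking down a topological order of one of the DAGs, and the required dependence must be recovered from causal minimality alone since we do not assume d-separation faithfulness. A secondary technical point is ensuring the restricted-HNM witness $\bm{s}$ leaves $p(x,y\mid\bm{s})$ non-degenerate (i.e.\ with $Q\not\equiv 0$), so that the hypotheses of Theorem~\ref{thm_DE} genuinely bite; this, however, is minor next to the combinatorial core.
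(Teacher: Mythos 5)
Your proposal takes essentially the same route as the paper's proof: argue by contradiction, extract a reversed edge $X\rightarrow Y$ in $\mathbb{G}$ versus $Y\rightarrow X$ in the competing DAG via causal minimality, condition on $\bm{S}=(\textnormal{Pa}_{\mathbb{G}}(Y)\cup\textnormal{Pa}_{\underline{\mathbb{G}}}(X))\setminus\{X,Y\}$ (the paper's $\bm{Q}\cup\bm{R}$), use the global Markov property and \cite[Lemma 36]{Peters14} to obtain a bivariate HNM in both directions, and contradict the restricted-HNM witness $\bm{s}$ via Theorem~\ref{thm_DE}. The ``hard part'' you flag --- verifying that the reversed edge and $\bm{S}$ can be chosen so that the sandwiching between parents and non-descendants holds in \emph{both} DAGs --- is a genuine subtlety, but the paper handles it by appeal to causal minimality and the machinery of \cite{Peters14} rather than re-proving it, so your plan is if anything more explicit than the published argument on exactly the point where it is tersest.
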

\noindent Our direct proof reduces the multivariate model to a bivariate one and then applies a contradiction using Theorem \ref{thm_DE}. We can also prove the statement indirectly using a general result in \cite[Theorem 2]{Peters11}. We conclude that the HNM model uniquely identifies the entire DAG as required for GRCI.

\subsection{Error-Term Extraction} \label{sec_HNM:extract}
 We can extract the error terms $\bm{E}$ from HNM using the Partial-Out algorithm summarized in Algorithm \ref{alg_PO}. The error term $E_i \in \bm{E}$ corresponds to:
\begin{equation} \label{eq_errHNM}
    E_i = \frac{X_i - m_i(\textnormal{Pa}(X_i))}{\sigma_i(\textnormal{Pa}(X_i))}.
\end{equation}
The call Partial-Out($\textnormal{Pa}(X_i),X_i$) first estimates the conditional expectation $m_i(\textnormal{Pa}(X_i))$ by regressing $X_i$ on $\textnormal{Pa}(X_i)$ in Line \ref{alg_PO:expectation}. Partial-Out optimizes all regression hyperparameters by cross-validation. The residuals correspond to:
\begin{equation} \nonumber
    X_i - \widehat{m}_i(\textnormal{Pa}(X_i)) = E_i \sigma_i(\textnormal{Pa}(X_i)) + o_p(1),
\end{equation}
where $\widehat{m}_i(\textnormal{Pa}(X_i))$ denotes the estimate of the conditional expectation using a non-linear regression method. Let $\ddot{m}_i(\textnormal{Pa}(X_i))$ denote the estimates of the conditional expectation on the validation folds with the best hyperparameter set. The algorithm then estimates the conditional MAD in Line \ref{alg_PO:variance} by regressing $|X_i - \ddot{m}_i(\textnormal{Pa}(X_i))|$ on $\textnormal{Pa}(X_i)$ via least squares (or mean squared error) using the \textit{same folds} as in Line \ref{alg_PO:expectation} because:
\begin{equation} \nonumber
\begin{aligned}
    &\mathbb{E}\big(|X_i - m_i(\textnormal{Pa}(X_i))|\big|\textnormal{Pa}(X_i)\big)\\ = \hspace{1mm} &\sigma_i(\textnormal{Pa}(X_i)) \cancelto{1}{\mathbb{E}\big(|E_i|\big|\textnormal{Pa}(X_i)\big)}.
\end{aligned}
\end{equation}
Care must be taken to regress $|X_i - \ddot{m}_i(\textnormal{Pa}(X_i))|$ and not $|X_i - \widehat{m}_i(\textnormal{Pa}(X_i))|$ so that the training folds from Line \ref{alg_PO:expectation} do not influence the validation folds in Line \ref{alg_PO:variance}. We use the conditional MAD instead of the conditional standard deviation because we can directly estimate the conditional MAD and divide by it. Squaring the residuals in Step \ref{alg_PO:variance} to estimate the conditional variance and then taking its square root to obtain the conditional standard deviation can lead to large estimation errors in practice. 

Partial-Out finally computes the error estimate in Line \ref{alg_PO:error} as:
\begin{equation} \label{eq_errHNM_est}
    \widehat{E}_i = \frac{X_i - \widehat{m}_i(\textnormal{Pa}(X_i))}{\widehat{\sigma}_i(\textnormal{Pa}(X_i))},
\end{equation}
per Equation \eqref{eq_errHNM}. We \textit{partial out} $\textnormal{Pa}(X_i)$ from $X_i$, when we compute $\widehat{E}_i$ by running Partial-Out($\textnormal{Pa}(X_i),X_i$) under HNM. 

We can implement the regressions with a variety of non-linear regression methods. We use linear splines in our experiments due to their relative robustness to overfitting and their ability to admit fast leave one out cross-validation using the Sherman–Morrison–Woodbury formula. We normalize all variables to $[0,1]$ and then use $m$ equispaced knots on $[0,1]$ (always including $1$ and replacing it with an offset). We choose $m$ by leave one out cross-validation from 10 equispaced points between $2$ and $\sqrt{n/10}$ inclusive, where $n$ denotes the sample size. We generalize to multivariate regression by randomly projecting $t>1$ variables onto $[0,1]$ using $\sum_{i=1}^t w_i X_i$ with the vector $\bm{w}$ obeying a Dirichlet distribution with alpha vector equal to all ones. This process ensures that we sample all weights uniformly from the $t-1$ simplex, since we have no prior knowledge about the sparsity level. 

\begin{algorithm}[t]
  \nonl \textbf{Input:} $\bm{V},X_i$\\
 \nonl \textbf{Output:}  $\widehat{E}_i$\\
 \BlankLine

$\widehat{m}_i(\bm{V}), \ddot{m}_i(\bm{V})\leftarrow$ Regress $X_i$ on $\bm{V}$ with cross-validation\\ \label{alg_PO:expectation}
$\widehat{\sigma}_i(\bm{V}) \leftarrow$ Regress $|X_i - \ddot{m}_i(\bm{V})|$ on $\bm{V}$ with cross-validation\\ \label{alg_PO:variance}
Compute $\widehat{E}_i$ per Equation  \eqref{eq_errHNM_est} \label{alg_PO:error}
\caption{Partial-Out} \label{alg_PO}
\end{algorithm}

\section{Patient-Specific Root Causes of Disease} \label{sec_root}

\begin{figure}
\centering 
\begin{tikzpicture}[scale=1.0, shorten >=1pt,auto,node distance=2.8cm, semithick]
                    
\tikzset{vertex/.style = {inner sep=0.4pt}}
\tikzset{edge/.style = {->,> = latex'}}
 
\node[vertex] (1) at  (0,0) {$X_1$};
\node[vertex] (2) at  (1.5,0) {$X_2$};
\node[vertex] (3) at  (3,0.5) {$X_3$};
\node[vertex] (4) at  (3,-0.5) {$X_4$};
\node[vertex] (5) at  (4.5,0.5) {$X_5$};
\node[vertex] (6) at  (4.5,-0.5) {$X_6$};
\node[vertex] (7) at  (6,0) {$D$};

\node[vertex] (8) at  (-0.5,1) {$E_1$};
\draw[edge] (8) to (1);
\node[vertex] (9) at  (1,1) {\textcolor{blue}{$E_2=e_2$}};
\draw[edge,blue] (9) to (2);
\node[vertex] (10) at  (2.5,1.5) {$E_3$};
\draw[edge] (10) to (3);
\node[vertex] (11) at  (4,1.5) {$E_5$};
\draw[edge] (11) to (5);
\node[vertex] (13) at  (2.5,-1.5) {$E_4$};
\draw[edge] (13) to (4);
\node[vertex] (14) at  (4,-1.5) {$E_6$};
\draw[edge] (14) to (6);
\node[vertex] (12) at  (5.5,1) {$E_7$};
\draw[edge] (12) to (7);

\draw[edge] (1) to (2);
\draw[edge,blue] (2) to (3);
\draw[edge,blue] (2) to (4);
\draw[edge,blue] (3) to (5);
\draw[edge,blue] (4) to (6);
\draw[edge,blue] (5) to (7);
\draw[edge,blue] (6) to (7);
\end{tikzpicture}

\caption{The augmented graph of Figure \ref{fig_root_cause}. We represent the shock as a perturbation to the exogenous error term $E_2$.} \label{fig_graphs}
\end{figure}
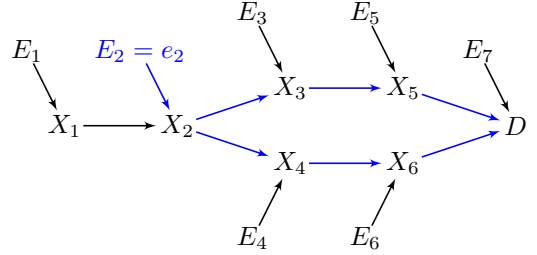

\subsection{Definition} \label{sec_root:def}
We want GRCI to compute patient-specific statistics to more specifically identify \textit{patient-specific root causes of disease}, so we need to rigorously define the term. Consider a binary variable $D$ denoting a diagnostic label of disease when $D=1$ and healthy when $D=0$. We assume that $D$ is a sink node in $\mathbb{G}$; this is a reasonable assumption because scientists who seek to identify the causes of $D$ frequently measure phenomena like transcriptomic levels or environmental exposures that are believed to precede the diagnosis in time.

A patient-specific root cause then corresponds to an exogenous shock to an otherwise healthy causal process that increases the probability that $D=1$ as a downstream effect; we provided an example in Figure \ref{fig_root_cause}. We model this initial shock as a change from a ``healthy'' value $E_i = \widetilde{e}_i$ to an ``unhealthy'' one $e_i$ while keeping the structural equations unchanged. We can interpret the change in value as a stochastic natural intervention; it is \textit{stochastic} because it represents a draw from $\mathbb{P}(E_i)$ independent of $\bm{E} \setminus E_i$, and it is a \textit{natural intervention} because the error terms have no parents. The change of value of $E_i$ affects downstream variables and ultimately increases the probability that $D=1$ (Figure \ref{fig_graphs}). 

We can quantify the change in probability of developing disease. We consider the following logarithmic odds:
\begin{equation}  \nonumber
    f(\bm{E}) = \textnormal{ln}\left(\frac{\mathbb{P}(D=1|\bm{E})}{
    \mathbb{P}(D=0|\bm{E})}\right).
\end{equation}
Let $v(\bm{W})$ denote the conditional expectation $\mathbb{E}(f(\bm{E})|\bm{W})$ where $\bm{W} \subseteq \bm{E} \setminus E_i$. We can measure the change in probability when intervening on $E_i \in \bm{E}$ with the following difference:
 \begin{equation} \label{eq:gamma}
 \gamma_{E_i \bm{W}} = v(E_i, \bm{W}) - v(\bm{W}).
 \end{equation}
We have $\gamma_{E_i \bm{W}}>0$ when $E_i$ increases the probability that $D=1$ given $\bm{W}$ because $v(E_i, \bm{W}) > v(\bm{W})$.

 Complex diseases may have multiple root causes that induce disease only when present in specific combinations. For example, a single genetic perturbation may not lead to cancerous growth, but multiple mutations often do. We therefore average over all possible combinations of $\bm{W} \subseteq \bm{E} \setminus E_i$ with $E_i$:
\begin{equation} \label{eq_shap}
 S_i = \frac{1}{p}\hspace{-13mm}\underbrace{\sum_{\bm{W} \subseteq (\bm{E} \setminus E_i)} \frac{1}{\binom{p-1} {|\bm{W}|}}}_{\textnormal{Average over all possible combinations of } \bm{E} \setminus E_i} \hspace{-12mm}\gamma_{E_i \cup \bm{W}}.
\end{equation}
An instantiation of the random variable $S_i$ is precisely the \textit{Shapley value} of \cite{Lundberg18} because we average over the differences of (conditionally expected) model predictions for all possible combinations of the errors. The Shapley value is \textit{sample-specific} because it depends on the values of $\bm{E}$ for each sample. 

The Shapley value satisfies the following three desiderata for each sample \cite{Lundberg18}:
\begin{enumerate}[leftmargin=*,label=(\arabic*)]
    \item Local accuracy: $\sum_{i=1}^p s_i = f(\bm{e}) - \mathbb{E} f(\bm{E})$;
    \item Missingness: if $E_i \not \in \bm{E}$, then $s_i = 0$;
    \item Consistency: $s_i^\prime \geq s_i$ for any two models $f^\prime$ and $f$ where $\gamma^\prime_{e_i\bm{w}} \geq \gamma_{e_i\bm{w}}$ for all $\bm{W} \subseteq \bm{E} \setminus E_i$.
\end{enumerate}
The first criterion ensures that the sum of the Shapley values remains invariant when ported to populations with different disease prevalence rates $\mathbb{E} f(\bm{E})$. The second criterion means that the error term of $D$ always has a Shapley value of zero. Finally, the third criterion says that the Shapley value only increases when the associated error term increases the probability of a patient developing disease (among all subsets). The Shapley value is in fact the \textit{only} value satisfying the local accuracy, missingness and consistency properties.

The above three desiderata and the corresponding uniqueness of the Shapley value justify the following definition of a patient-specific root cause:
\begin{definition} \label{def_root}
$X_i \in \bm{X}$ is a patient-specific root cause of disease ($D=1$) if $X_i \in \textnormal{Anc}_{\mathbb{G}^\prime}(D)$ and $S_i=s_i > 0$.
\end{definition}
\noindent In other words, $X_i \in \bm{X}$ is a patient-specific root cause if it is a cause of $D$ and its error predictably induces $D=1$, where predictivity is defined using the Shapley value $s_i$. A patient may have multiple root causes that lead to disease because $S_i$ is defined for each variable in $X_i \in \bm{X}$. We do not consider the case where $s_i \leq 0$ because $E_i=e_i$ does not increase the probability of disease in this case. We may finally consider a threshold greater than zero provided we have additional background knowledge regarding a clinically meaningful effect size.

Another paper quantifies the root causal contribution of outliers also using the Shapley values based on the error terms of a structural equation model \cite{Budhathoki22}. However, their Shapley values differ from ours in many regards. First, their Shapley values do not quantify the predictivity of developing disease, but rather the probability of encountering an event more extreme than the one observed. We ultimately want to eliminate disease regardless of symptom severity, so we focus on identify root causes of disease ($D=1$) for a given patient rather than root causes of having symptoms worse than a given patient. Second, if we interpret a patient as an outlier event and use a diagnostic cut-off score for each sample, then their root causal contribution measure loses sample specificity because we apply the same cut-off score to each sample. In contrast, our root causal contribution measure maintains sample-specificity even with the same cut-off applied to all samples. Third, computing their Shapley values requires prior knowledge about the ``normal'' distributions of the error terms which is difficult to determine in biomedical applications. The authors only use the empirical distribution of the error terms in their experiments. Under this selection, the total score of their Shapley values depends on the disease prevalence rate of a population -- even when the structural equations remain intact -- because their values obey $\sum_{i=1}^p s_i = \ddot{f}(\bm{e})$ for some outlier score $\ddot{f}$ that increases with greater prevalence. Fourth, the authors assume that the diagnosis corresponds to a noiseless cutoff score, even though the diagnosis is noisy because it may differ between diagnosticians in practice. We allow a noisy label. Fifth, we can leverage the mutual independence of $\bm{E}$ and existing fast algorithms to approximate our Shapley values, whereas theirs requires brute force iteration over all possible permutations or Monte Carlo sampling. Our root causal contribution measure thus uniquely targets disease, applies to specific samples, adjusts to the disease prevalence rate, allows noisy labels and admits efficient computation with error term distributions inferred directly from the data.

Finally, we justify our approach using an interventionist account, but we can also regard our interpretation of root causes of disease as a particular type of backtracking counterfactual. \cite{Von22} introduced the backtracking conditional distribution $\mathbb{P}(\bm{E}^*|\bm{E})$ that relates the error terms of a factual world $\bm{E}$ to those of a counterfactual world $\bm{E}^*$.  We can explain the value of $X_i$ with a potentially infinite number of different values of upstream error terms depending on the choice of the backtracking conditional. We restrict our attention to invertible SEMs and model the change in value of a patient-specific root cause $X_i$ to its error term -- as opposed to combinations of multiple upstream error terms.

\subsection{Generalized Root Causal Inference} \label{sec_GRCI}

We now detail the GRCI algorithm that recovers patient-specific root causes of disease from data. We summarize GRCI in Algorithm \ref{alg_GRCI}.

\begin{algorithm}[b]
 \nonl \textbf{Input:} $\bm{X}$, test set $\mathcal{T}$\\
 \nonl \textbf{Output:}  $\mathcal{S}$\\
 \BlankLine

$\widehat{\mathbb{G}} \leftarrow$ Skeleton-Stable($\bm{X}$) \label{alg_GRCI:PC} \\
$\bm{E},\bm{N} \leftarrow$ Extract-Errors($\bm{X},\widehat{\mathbb{G}}$) \label{alg_GRCI:error}\\
Compute the matrix $\mathcal{S}$ containing the estimated Shapley values of each patient in $\mathcal{T}$ \label{alg_GRCI:S}

\caption{Generalized Root Causal Inference (GRCI)} \label{alg_GRCI}
\end{algorithm}

\subsubsection{Skeleton Discovery}

Non-linear regressors can easily overfit in high dimensions. GRCI therefore first reduces the dimensionality of the necessary regressions in Step \ref{alg_GRCI:PC} by identifying the \textit{skeleton} of $\bm{X}$, or the presence and absence of the directed edges in $\mathbb{G}$. GRCI uses an algorithm called Skeleton-Stable -- the skeleton discovery procedure of the well-known PC-Stable algorithm which identifies the skeleton using a series of CI tests \cite{Colombo14}. An edge is not present between any two variables $X_i$ and $X_j$ in $\mathbb{G}$ if and only if $X_i \ci X_j | \bm{W}$ for some $\bm{W} \subseteq \textnormal{Pa}(X_i)$ or some $\bm{W} \subseteq \textnormal{Pa}(X_j)$, under d-separation faithfulness \cite{Spirtes00}. Skeleton-Stable therefore tests whether $X_i$ and $X_j$ are conditionally independent given dynamically adjusted supersets of the parents. We skip further details of the algorithm, since they are not important for this paper.

\subsubsection{Global Error Term Extraction}
In Step \ref{alg_GRCI:error}, GRCI uses the skeleton identified by Skeleton-Stable to extract the error terms of $\bm{X}$ with the Extract-Errors algorithm.

We have summarized Extract-Errors in Algorithm \ref{alg_EE}. Extract-Errors initializes $\bm{M}$ to the set of all variables in $\bm{X}$. The algorithm then iteratively removes a member from $\bm{M}$ in Line \ref{alg_EE:remove} and places it into $\bm{N}$ in Line \ref{alg_EE:add} so that $\bm{N}$ ultimately contains a reverse partial-order of $\bm{X}$. Extract-Errors identifies the variable to remove from $\bm{M}$ in Step \ref{alg_EE:sink} using the Find-Sink algorithm.

We summarize Find-Sink in Algorithm \ref{alg_sink}. Find-Sink identifies the variable whose parents are most independent of its residuals due to the following result:
\begin{lemma} \label{lem_base}
If $X_i \in \bm{M}$ is a sink node, then $E_i \ci X_j$ for all $X_j \in \bm{M} \setminus X_i$. 
\end{lemma}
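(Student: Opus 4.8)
The plan is to pass to the augmented graph $\mathbb{G}^\prime$ over $\bm{X}\cup\bm{E}$ and invoke the Markov property. Recall that in $\mathbb{G}^\prime$ the error term $E_i$ is a source, i.e.\ $\textnormal{Pa}_{\mathbb{G}^\prime}(E_i)=\emptyset$, whose unique child is $X_i$. First I would note that the distribution induced by the SEM factorizes according to $\mathbb{G}^\prime$: by mutual independence of $\bm{E}$ and Equation \eqref{eq_HNM}, $p(\bm{X},\bm{E})=\prod_i p(X_i\mid\textnormal{Pa}(X_i),E_i)\prod_i p(E_i)$, which is exactly the $\mathbb{G}^\prime$-factorization. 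Hence $\mathbb{G}^\prime$ obeys the global Markov property and therefore also the local Markov property: every node is independent of its non-descendants given its parents.

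Next I would identify the descendants of $E_i$ in $\mathbb{G}^\prime$. Since $E_i$ is a source whose single child is $X_i$, we have $\textnormal{De}_{\mathbb{G}^\prime}(E_i)=\{X_i\}\cup\textnormal{De}_{\mathbb{G}}(X_i)$; no other error term is a descendant of $E_i$, because each $E_k$ is itself a source. So it suffices to show that no $X_j\in\bm{M}\setminus\{X_i\}$ is a descendant of $X_i$ in $\mathbb{G}$; then $X_j\in\textnormal{Nd}_{\mathbb{G}^\prime}(E_i)$, and the local Markov property applied to $E_i$ (whose parent set is empty) yields $E_i\ci X_j$ for all such $X_j$, as claimed.

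The remaining — and \emph{main} — step is to show that $\bm{M}\setminus\{X_i\}$ contains no descendant of $X_i$. This rests on the invariant maintained by Extract-Errors: because $\bm{N}$ is grown by repeatedly transferring a sink of the subgraph induced on $\bm{M}$ into $\bm{N}$, a short induction shows that $\bm{N}$ is closed under taking children in $\mathbb{G}$ (equivalently, closed under descendants), so its complement $\bm{M}$ is \emph{ancestrally closed}. Now suppose for contradiction that some $X_j\in\bm{M}\setminus\{X_i\}$ were a descendant of $X_i$, witnessed by a directed path $X_i\to Z\to\cdots\to X_j$. The vertex $Z$ is an ancestor of $X_j\in\bm{M}$, hence $Z\in\bm{M}$ by ancestral closedness; but then $X_i$ has a child in $\bm{M}$, contradicting the hypothesis that $X_i$ is a sink of the subgraph induced on $\bm{M}$. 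This establishes the claim and completes the proof.

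I expect the only genuine subtlety to be this last point: ``$X_i$ is a sink node'' must be read relative to $\bm{M}$, and a sink of the induced subgraph could a priori still have descendants that were removed in earlier iterations — it is precisely the ancestral-closedness invariant of $\bm{M}$ that rules this out. Everything else is a direct application of the local Markov property to $\mathbb{G}^\prime$. An alternative to the Markov-property step is a d-separation argument: every path from $E_i$ to such an $X_j$ in $\mathbb{G}^\prime$ must begin $E_i\to X_i$ and, since $X_i$ has no children in $\bm{M}$ while $\bm{M}$ is ancestrally closed, it must contain a collider, so $E_i$ and $X_j$ are d-separated given the empty set; this routes through the global Markov property instead but relies on the same combinatorial invariant.
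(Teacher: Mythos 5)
Your proposal is correct and follows essentially the same route as the paper, which simply asserts that $E_i$ and $X_j$ are d-separated in $\mathbb{G}^\prime$ and invokes the global Markov property. The only difference is that you fill in the details the paper leaves implicit --- in particular the ancestral-closedness invariant of $\bm{M}$ needed to justify that a sink of the induced subgraph has no descendants remaining in $\bm{M}$, which is indeed the one genuinely non-trivial point.
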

\begin{proof}
$E_i$ and $X_j$ are d-separated in $\mathbb{G}^\prime$ for all $X_j \in \bm{M} \setminus X_i$. The conclusion follows by the global Markov property. 
\end{proof}
The algorithm in particular runs Partial-Out on each variable $X_i \in \bm{M}$ given its neighbors to recover the residuals $R_i$. Find-Sink then computes the mutual information score $\max_{X_j \in \textnormal{Ne}(X_i)} I(X_j;R_i)$ using the nearest neighbor technique proposed in \cite{Kraskov04}. A lower mutual information score indicates a higher degree of independence. Find-Sink logs all of the mutual information scores associated with $\bm{M}$ in $\bm{T}$, and then identifies the sink node in Line \ref{alg_sink:min} as the variable in $\bm{M}$ associated with the smallest score in $\bm{T}$. 

Extract-Errors then partials out the sink node $S$ identified by Find-Sink in Line \ref{alg_EE:partial}. The algorithm also removes edges adjacent to $S$ in $\widehat{\mathbb{G}}$. The neighborhoods of some of the variables in $\bm{M}$ change due to this step -- denote these variables in $\bm{M}$ by $\bm{U}$. Extract-Errors updates the scores in $\bm{T}$ for $\bm{U}$ in the next iteration. Repeating this process of identifying a sink node in $\bm{M}$, partialing out its errors and placing it into $\bm{N}$ until $\bm{M}$ is empty results in (1) a reverse partial order in $\bm{N}$ and (2) all of the error terms collected in $\bm{E}$. More formally:
\begin{lemma} \label{lem_errors}
Extract-Errors recovers all of the error terms of $\bm{X}$.
\end{lemma}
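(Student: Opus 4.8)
The plan is to argue by induction on the outer loop of Extract-Errors, carrying along a loop invariant and checking that each iteration both identifies a true sink and extracts the corresponding error term correctly. Write $\mathbb{G}_{\bm{M}}$ for the subgraph of $\mathbb{G}$ induced on the current set $\bm{M}$. The invariant I would maintain is that \emph{$\bm{M}$ is an ancestral set}, i.e.\ $X_j \in \bm{M}$ and $X_k \in \textnormal{Pa}(X_j)$ imply $X_k \in \bm{M}$. Initially $\bm{M} = \bm{X}$, which is ancestral. If Find-Sink returns a node $S$ that is genuinely a sink of $\mathbb{G}_{\bm{M}}$, then $S$ has no child in $\bm{M}$, so removing it cannot strand a parent edge; hence $\bm{M}\setminus S$ is again ancestral and the invariant is preserved. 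The invariant does two things. First, since $\bm{M}$ is ancestral and $S \in \bm{M}$, all true parents of $S$ remain in $\bm{M}$, while $S$ has no children there; because Extract-Errors deletes edges incident to removed nodes and Skeleton-Stable recovers the true skeleton over $\bm{X}$ under d-separation faithfulness, the neighborhood of $S$ that Find-Sink actually uses equals $\textnormal{Pa}(S)$. Second, this means the call Partial-Out$(\textnormal{Pa}(S),S)$ recovers $E_S$ exactly as in Equation \eqref{eq_errHNM}, by the correctness of the two nested least-squares regressions discussed in Section \ref{sec_HNM:extract}.

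The heart of the matter is therefore to show that Find-Sink indeed returns a sink of $\mathbb{G}_{\bm{M}}$. Since $\mathbb{G}_{\bm{M}}$ is a nonempty DAG it has at least one sink, so it suffices to show that (i) every sink of $\mathbb{G}_{\bm{M}}$ attains mutual information score zero, and (ii) every non-sink attains strictly positive score; then the $\argmin$ in Line \ref{alg_sink:min} is necessarily a sink. For (i): if $S'$ is a sink of $\mathbb{G}_{\bm{M}}$, the invariant gives that Partial-Out returns $R_{S'} = E_{S'}$, and by Lemma \ref{lem_base} $E_{S'} \ci X_j$ for every $X_j \in \bm{M}\setminus S'$, so $\max_{X_j \in \textnormal{Ne}(S')} I(X_j; R_{S'}) = 0$. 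For (ii): suppose $X_i$ is not a sink of $\mathbb{G}_{\bm{M}}$ but its residual $R_i$ after partialing out its current neighborhood were independent of all of them. Then $X_i$ would satisfy an HNM whose parent set is this neighborhood, which \emph{reverses} at least one true edge $X_i \to X_k$ with $X_k \in \bm{M}$. Conditioning on the remaining partialed-out neighbors $\bm{S} = \textnormal{Ne}(X_i)\setminus X_k$ and applying Theorem \ref{thm_DE} to the bivariate conditional problem between $X_i$ and $X_k$ would force the differential equation \eqref{eq_DE} to hold for $p(x_i,x_k \mid \bm{s})$ at every admissible value $\bm{s}$ with positive density, contradicting the restricted-HNM assumption; equivalently, one obtains a second causally minimal DAG in $\mathcal{G}$ differing from $\mathbb{G}$ only by the reversal of $X_i \to X_k$, contradicting the uniqueness asserted in Theorem \ref{thm:full}. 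Hence non-sinks have strictly positive score and Find-Sink returns a sink.

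With both points in hand the induction closes: each iteration removes exactly one node, always a correctly identified sink whose error term is correctly extracted, and the loop runs until $\bm{M} = \emptyset$, so every $X_i \in \bm{X}$ is processed exactly once. Therefore Extract-Errors places $E_i$ in $\bm{E}$ for every $X_i \in \bm{X}$, which is the claim.

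I expect the non-sink lower bound in step (ii) to be the main obstacle, for two reasons. First, HNM is not closed under marginalization, so one cannot blithely claim that the forward direction $X_i \to X_k$ remains an HNM after conditioning on the other partialed-out variables; the reduction to the bivariate identifiability statement must be set up precisely as in the proof of Theorem \ref{thm:full}, taking the conditioning set inside $\textnormal{Nd}(X_k)$ so that the restricted-HNM clause is the one that applies. Second, the statement is inherently a population (infinite-sample) claim: Partial-Out returns $E_i + o_p(1)$ rather than $E_i$, and Find-Sink uses a consistent nearest-neighbor estimator of mutual information, so the argument above is carried out with the population regression functions and population mutual information, and the finite-sample version follows by standard consistency of the estimators.
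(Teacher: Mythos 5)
Your proposal is correct in outline and is in fact considerably more thorough than the paper's own argument. The paper proves this lemma in three lines: an induction on $|\bm{M}|$ whose entire content is that a sink node exists at each stage and, by Lemma \ref{lem_base}, its error term is independent of the remaining variables; the inductive hypothesis then finishes. In other words, the paper only ever uses the \emph{forward} direction of the sink criterion (sink $\Rightarrow$ zero mutual information) and never argues your step (ii) -- that non-sinks attain strictly positive score -- even though that converse is exactly what is needed for the $\argmin$ in Find-Sink to provably return a sink rather than merely for some sink to be among the candidates. Your reduction of step (ii) to the restricted-HNM identifiability machinery (Theorem \ref{thm_DE} via the conditioning trick in the proof of Theorem \ref{thm:full}) is the right tool and is precisely what the paper implicitly relies on without saying so; your ancestral-set invariant and the observation that Partial-Out on a true sink's parents recovers $E_S$ exactly are likewise correct and left tacit in the paper. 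The one wrinkle you flag yourself is real and not fully resolved: the conditioning set actually used by Find-Sink is $\textnormal{Ne}(X_i)\setminus X_k$, which may contain descendants of $X_k$ and may omit members of $\textnormal{Pa}(X_k)\setminus X_i$, so it need not satisfy the sandwich condition $(\textnormal{Pa}(X_k)\setminus X_i) \subseteq \bm{S} \subseteq (\textnormal{Nd}(X_k)\setminus X_i)$ under which the restricted-HNM clause applies; closing this would require either an argument that a valid conditioning set can be substituted, or a strengthening of the restricted-HNM definition. Since the paper does not attempt step (ii) at all, your proposal strictly dominates the published proof in rigor; the published version buys brevity at the cost of leaving the correctness of Find-Sink on non-sinks entirely unargued.
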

\begin{proof}
We prove this induction. The base case follows by Lemma \ref{lem_base}. For the induction step, assume that Extract-Errors recovers all error terms when $|\bm{M}| = n$. We need to show that the statement holds for $n+1$. We can recover a sink node from $\bm{M}$ when $|\bm{M}| = n+1$ by Lemma \ref{lem_base}. The conclusion follows by the inductive hypothesis. 
\end{proof}

\subsubsection{Shapley Values}
GRCI finally computes the matrix $\mathcal{S}$ containing the Shapley values in Step \ref{alg_GRCI:S}. The $i^\textnormal{th}$ column and $j^\textnormal{th}$ row of $\mathcal{S}$ contains the Shapley value of $X_i \in \bm{X}$ for the patient $j$ in test set $\mathcal{T}$. We approximate these values to high accuracy in practice by predicting $D$ with XGBoost using the error terms recovered by Extract-Errors and then applying the TreeSHAP algorithm \cite{Lundberg18,Chen16}. We certify GRCI with the following theorem, where we assume access to a Shapley oracle that outputs the true Shapley values with mutually independent predictors:
\begin{theorem}
(Fisher consistency) Assume access to CI, regression and Shapley oracles. Then, under d-separation faithfulness and HNM over $\bm{X}$, GRCI recovers the true Shapley values and therefore the patient-specific root causes of disease for all samples in $\mathcal{T}$.
\end{theorem}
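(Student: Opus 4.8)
The plan is to verify that each of the three steps of Algorithm~\ref{alg_GRCI} returns the ground truth once its inputs are supplied by the stated oracles, so that Fisher consistency follows by composition. \textbf{Step~1 (skeleton).} Under d-separation faithfulness there is no edge between $X_i$ and $X_j$ iff $X_i\ci X_j\mid\bm{W}$ for some $\bm{W}\subseteq\textnormal{Pa}(X_i)$ or $\bm{W}\subseteq\textnormal{Pa}(X_j)$, which is exactly the condition Skeleton-Stable probes; answered by a CI oracle it therefore returns precisely the adjacencies of $\mathbb{G}$ over $\bm{X}$, so $\widehat{\mathbb{G}}$ is the true skeleton. \textbf{Step~2 (error extraction).} I would then argue Extract-Errors (Algorithm~\ref{alg_EE}) returns the true error vector $\bm{E}$ of Equation~\eqref{eq_errHNM}. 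Lemma~\ref{lem_errors} states this, but its induction presumes that Find-Sink (Algorithm~\ref{alg_sink}) returns an actual sink of the subgraph induced on the current working set $\bm{M}$ at every iteration, and establishing that is the substantive part.

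For the Find-Sink claim I would prove both directions, after recording the structural fact that Extract-Errors deletes only sinks, so no parent is removed before any of its children: hence while $X_i\in\bm{M}$ we have $\textnormal{Pa}(X_i)\subseteq\bm{M}$ and $\textnormal{Ne}_{\bm{M}}(X_i)=\textnormal{Pa}(X_i)\cup(\textnormal{Ch}(X_i)\cap\bm{M})$. \emph{(i)} If $X_i$ is a sink of the induced subgraph, this neighbor set is just $\textnormal{Pa}(X_i)$, so with a regression oracle Partial-Out$(\textnormal{Ne}_{\bm{M}}(X_i),X_i)$ returns exactly $E_i=(X_i-m_i(\textnormal{Pa}(X_i)))/\sigma_i(\textnormal{Pa}(X_i))$ (the cross-validation in Algorithm~\ref{alg_PO} collapses onto the true $m_i,\sigma_i$), and Lemma~\ref{lem_base} gives $E_i\ci X_j$ for every neighbor $X_j$, so its Find-Sink score $\max_{X_j}I(X_j;R_i)$ equals zero. \emph{(ii)} If $X_i$ is \emph{not} a sink of the induced subgraph, it has a child $X_k\in\bm{M}$, the regression conditions on $X_k$ in addition to $\textnormal{Pa}(X_i)$, and I would show the residual $R_i$ is necessarily dependent on some neighbor, so the score is strictly positive. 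Combining \emph{(i)} and \emph{(ii)}, the variable minimizing the Find-Sink score is a sink---exactly what the induction in Lemma~\ref{lem_errors} needs---so Step~2 returns the true $\bm{E}$ (and $\bm{N}$ a valid reverse partial order).

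Part \emph{(ii)} is the main obstacle, and it is where the identifiability theory of Section~\ref{sec_identify} enters. The idea is a bivariate reduction: within a fixed stratum of $\textnormal{Pa}(X_i)$ (which a flexible regression treats separately) the pair $X_i\to X_k$ is a bivariate HNM, and if $R_i$ were independent of $X_k$ on that stratum then a backward HNM from $X_k$ to $X_i$ would hold there, forcing the rigid differential equation~\eqref{eq_DE}. Since $\textnormal{Pa}(X_i)$ is a set of non-descendants of $X_k$ not containing $X_i\in\textnormal{Pa}(X_k)$, the restricted-HNM condition behind Theorem~\ref{thm:full} (which we take to be in force) supplies a stratum on which~\eqref{eq_DE} fails, contradicting Theorem~\ref{thm_DE}. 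The delicate points I would have to pin down are (a) matching the conditioning set Partial-Out actually uses---the neighbors of $X_i$ in $\bm{M}$, which may include several children as well as all of $\textnormal{Pa}(X_i)$---to the sets $\bm{S}$ quantified in the definition of restricted HNM, by choosing the response/predictor pair in the reduction correctly and absorbing or peeling off the remaining neighbors; and (b) lifting ``dependence on $X_k$ within a stratum'' to the unconditional dependence that $I(X_k;R_i)$ detects. Most of the technical work lives here.

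\textbf{Step~3 (Shapley values).} The error terms recovered in Step~2 are the \emph{true} ones and are mutually independent by the model, so the Shapley oracle---exact for mutually independent predictors---returns precisely the values $S_i$ of Equation~\eqref{eq_shap} (the exact analogue of the XGBoost/TreeSHAP computation) for every sample, in particular every sample of $\mathcal{T}$. It remains to match ``$S_i>0$'' to Definition~\ref{def_root}. If $X_i\notin\textnormal{Anc}_{\mathbb{G}^\prime}(D)$, then $E_i$ is not among the error terms determining $\textnormal{Pa}(D)$, so $\mathbb{P}(D=1\,|\,\bm{E})=\mathbb{P}(D=1\,|\,\textnormal{Pa}(D))$ does not depend on $E_i$; by mutual independence of $\bm{E}$ this gives $f(\bm{E})\ci E_i$ given every $\bm{W}\subseteq\bm{E}\setminus E_i$, so every $\gamma_{E_i\bm{W}}$ in Equation~\eqref{eq:gamma} vanishes and $S_i=0$. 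Hence $\{X_i:S_i>0\}\subseteq\textnormal{Anc}_{\mathbb{G}^\prime}(D)$, and by Equation~\eqref{eq_shap} together with this inclusion this set is exactly the collection of patient-specific root causes of Definition~\ref{def_root}. Composing Steps~1--3 gives Fisher consistency.
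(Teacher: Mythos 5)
Your proposal follows the same three-step decomposition as the paper's proof: skeleton recovery under d-separation faithfulness, error extraction via Lemma~\ref{lem_errors}, and then the Shapley oracle applied to the mutually independent error terms together with Definition~\ref{def_root}. The paper's version is only four sentences and cites exactly these ingredients, so at the level of structure you are reproducing it. Where you go further is in Step~2, and this is worth flagging: you correctly observe that the induction in Lemma~\ref{lem_errors} rests entirely on Lemma~\ref{lem_base}, which establishes only that a true sink yields residuals independent of its neighbors; it does \emph{not} establish the converse, namely that a non-sink in $\bm{M}$ receives a strictly positive score so that the $\argmin$ in Line~\ref{alg_sink:min} of Find-Sink cannot select a non-sink. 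The paper's proof of Lemma~\ref{lem_errors} silently assumes this converse, and your part~\emph{(ii)} is the first place it is named. Your sketch for closing it --- a bivariate reduction on a stratum of $\textnormal{Pa}(X_i)$ followed by an appeal to the restricted-HNM condition and Theorems~\ref{thm_DE} and~\ref{thm:full} --- is the natural route and mirrors the paper's own proof of Theorem~\ref{thm:full}, but note two things. First, it requires the \emph{restricted} HNM hypothesis, which the theorem statement (and the paper's proof) never invoke; plain ``HNM over $\bm{X}$'' does not by itself exclude a backward model. Second, your points (a) and (b) are genuine unresolved obstacles: Partial-Out conditions on all of $\textnormal{Ne}_{\widehat{\mathbb{G}}}(X_i)$, which may contain several children and hence does not match any set $\bm{S}$ quantified in the restricted-HNM definition, and dependence on a single stratum does not immediately yield $I(X_j;R_i)>0$ marginally.

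So the verdict is mixed in a way that reflects on the paper as much as on you: relative to the paper's own proof your write-up is at least as complete and considerably more candid about what is actually being assumed; relative to a fully rigorous proof, part~\emph{(ii)} remains open --- but that gap is inherited from Lemma~\ref{lem_errors} itself, not introduced by you. Your Step~3 addition, showing that $S_i=0$ whenever $X_i\notin\textnormal{Anc}_{\mathbb{G}^\prime}(D)$ because $\mathbb{P}(D=1\mid\bm{E})$ depends only on the errors of ancestors of $D$ and $\bm{E}$ is mutually independent, so every $\gamma_{E_i\bm{W}}$ in Equation~\eqref{eq:gamma} vanishes, is correct and supplies a detail the paper omits when it passes from ``entries of $\mathcal{S}$ greater than zero'' to Definition~\ref{def_root}.
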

\begin{proof}
Skeleton-stable recovers a superset of the skeleton of $\mathbb{G}$ under d-separation faithfulness \cite{Colombo14}. Extract-Errors recovers all of the error terms of $\bm{X}$ by Lemma \ref{lem_errors}. The Shapley oracle now has access to the mutually independent error terms and can therefore compute the matrix $\mathcal{S}$ containing the true Shapley values \cite{Lundberg18}. The conclusion follows for the entries in $\mathcal{S}$ greater than zero by Definition \ref{def_root}. 
\end{proof}

\subsubsection{Time Complexity} \label{sec:time}
GRCI is composed of three steps as summarized in Algorithm \ref{alg_GRCI}. Skeleton-Stable in Step \ref{alg_GRCI:PC} calls a CI test at most $O(p^r)$ times, where $r$ denotes the maximum number of neighbors of a vertex in $\mathbb{G}$. The CI test we implement performs a fixed number of multivariate adaptive spline regressions (MARS) each requiring $O(nrm^4)$ time, where $n$ denotes the sample size and $m$ the maximum number of basis functions \cite{Friedman91}.\footnote{We replace random Fourier regression with MARS regression and use a fixed number of non-linear transformations similar to \cite{Strobl19}.} Skeleton-Stable therefore requires $O(nrp^rm^4)$ time. The Extract-Errors function in Step \ref{alg_GRCI:error} iterates twice over the variables, so it requires on the order of $p^2$ iterations. Each iteration is dominated by Partial-Out which requires $O(n^2b + b^3)$ time, where $b$ denotes the maximum number of basis functions used during cross-validation. Extract-Errors therefore requires $O(n^2p^2b + p^2b^3)$ time. Finally, the TreeSHAP algorithm computes in $O(ntld^2)$ time, where $t$ refers to the number of trees, $l$ the maximum number of leaves, and $d$ the maximum tree depth. Repeating this process for each of the $p$ variables requires $O(nptld^2)$ time. We emphasize that the mutual independence on $\bm{E}$ and our particular definition of the Shapley value with model-based conditional expectations in Equation \eqref{eq_shap} enables fast Shapley value computations with tree models. GRCI ultimately requires $O(nrp^rm^4) + O(n^2p^2b + p^2b^3) + O(nptld^2)$ time; we can adjust the first, second and third terms, if we use a different CI testing, Partial-Out regression or Shapley value computation procedure, respectively. We conclude that GRCI scales quadratically with respect to sample size and polynomially $O(p^r)$ with respect to the number of variables if $r \geq 2$.

\begin{algorithm}[t]
  \nonl \textbf{Input:} $\bm{X}, \widehat{\mathbb{G}}$\\
 \nonl \textbf{Output:}  $\bm{E},\bm{N}$\\
 \BlankLine

$\bm{M}, \bm{U} \leftarrow \bm{X}$\\
$\bm{N} \leftarrow \emptyset$\\
$\bm{T} \leftarrow \infty$\\
\Repeat{$\bm{M} = \emptyset$}{
      $S \leftarrow$ Find-Sink($\bm{M},\bm{U},\bm{T},\widehat{\mathbb{G}}$) \\ \label{alg_EE:sink}
      $\bm{M} \leftarrow \bm{M} \setminus S$ \label{alg_EE:remove}\\
      $\bm{N} \leftarrow \bm{N} \cup S$ \label{alg_EE:add} \\
      $\bm{E}_S \leftarrow \textnormal{Partial-Out}(\textnormal{Ne}_{\widehat{\mathbb{G}}}(S),S)$\label{alg_EE:partial}\\
      $\bm{U} \leftarrow $ members of $\bm{M}$ adjacent to $S$ in $\widehat{\mathbb{G}}$\\
      Remove edges adjacent to $S$ in $\widehat{\mathbb{G}}$
    }
\caption{Extract-Errors} \label{alg_EE}
\end{algorithm}

\begin{algorithm}[t]
 \nonl \textbf{Input:} $\bm{M},\bm{U},\bm{T},\widehat{\mathbb{G}}$\\
 \nonl \textbf{Output:} sink $S$\\
 \BlankLine

\textbf{return} $\bm{M}$ if $|\bm{M}| = 1$\\

\For{$X_i \in \bm{U}$}{
    $R_i \leftarrow \textnormal{Partial-Out}(\textnormal{Ne}_{\widehat{\mathbb{G}}}(X_i),X_i)$\\
    $T_i \leftarrow \max_{X_j \in \textnormal{Ne}(X_i)} I(X_j;R_i)$ \label{alg_sink:D}
}
$S \leftarrow \bm{M}[\argmin_{X_i \in \bm{U}} T_i]$ \label{alg_sink:min}
\caption{Find-Sink} \label{alg_sink}
\end{algorithm}

\section{Experiments}

\noindent\textbf{Hyperparameters. } GRCI requires two hyperparameters: the $\alpha$ value for Skeleton-Stable and the $k$ value for the nearest neighbor mutual information estimator. We set $\alpha$ to the liberal threshold of 0.1, which in practice causes Skeleton-Stable to output a superset of the true skeleton. This superset represents a small subset of the fully connected graph that greatly reduces the dimensionality of the regressions performed in Step \ref{alg_GRCI:error}.

We fixed $k=10$ for the mutual information estimator for three reasons. First, the entropy estimate is consistent for any fixed value of $k$. The standard deviation of the estimator also stabilizes at $k=10$ for most sample sizes according to Figure 4 of \cite{Kraskov04}. Moreover, the estimate is near exact when independence truly holds as shown in Figure 2 of \cite{Kraskov04}. Both of these experimental results hold for nearly all cases tested by the authors.\\${}$\vspace{-3mm}\\
\noindent \textbf{Reproducibility. } All R code needed to replicate experimental results is available at github.com/ericstrobl/GRCI.

\subsection{Causal Direction}

GRCI computes a (reverse) partial ordering $\bm{N}$, so we can use the algorithm to recover causal direction in the bivariate setting after assuming that an edge exists between $X$ and $Y$. We compared GRCI against four algorithms on their ability to identify causal direction in the bivariate setting:
\begin{enumerate}[label=(\arabic*)]
\item HEteroscedastic noise Causal model (HEC): bins $X$ and fits a polynomial regressor in each bin while assuming intra-bin homoscedasticity. The algorithm chooses the causal direction as the one minimizing the BIC score \cite{Xu22}.
\item Fourth Order Moment (FOM): estimates the fourth-order moment of the residuals using a heteroscedastic Gaussian process. The algorithm chooses the causal direction as the one minimizing the fourth-order moment \cite{Cai20}.
\item REgression and Subsequent Independence Test (RESIT): assumes an ANM, regresses out the conditional mean using a Gaussian process and determines causal direction using a reproducing kernel-based conditional independence test \cite{Peters14}.
\item Direct LiNGAM (DL): assumes variables are linearly related with non-Gaussian errors \cite{Shimizu11}. The algorithm decides causal direction using the differential entropy measure proposed in \cite{Hyvarinen13}.
\end{enumerate}
The first two algorithms cover state of the art methods that handle heteroscedastic noise. The other two algorithms are state of the art for the additive noise and linear non-Gaussian acyclic models. Other algorithms in the literature utilize information theoretic measures and do not impose functional forms. We however only compare against methods which can extract the values of the error terms, since we are ultimately interested in performing patient-specific root causal inference rather than just determining causal direction. 

\subsubsection{Synthetic Data} \label{sec:pair_synth}

We generated data using four different functional models:
\begin{enumerate}[label=(\arabic*)]
    \item LiNGAM: $Y=X\beta + E$
    \item ANM: $Y=f(X) + E$
    \item HNM: $Y=f(X) + E g(X)$,
    \item PNL: $Y=h(f(X) + E)$,
\end{enumerate}
with $f(X)$ and $g(X)-1$ uniformly sampled from the set $\{ \sqrt{X^2 + 1} -1 , X\Psi(X), 1/(1+\textnormal{exp}(-X))\}$; we subtracted one from $g(X)$ to ensure non-zero variance. We uniformly sampled $h(\cdot)$ for PNL from a set of strictly monotonic functions: $\{ \textnormal{tanh}(\cdot), \textnormal{ln}(1+\textnormal{exp}(\cdot)), 1/(1+\textnormal{exp}(-\cdot))\}$. We sampled the distribution of $E$ uniformly from the following possibilities: uniform distribution on $[-1,1]$, t-distribution with five degrees of freedom, chi square distribution with three degrees of freedom. Note that LiNGAM requires at least one non-Gaussian error, whereas ANM and HNM do not. We therefore also included the centered Gaussian distribution with variance $1/9$ as one of the possibilities for the error term of ANM and HNM. We repeated the above procedure 200 times for LiNGAM with non-Gaussian errors, 200 times for ANM with non-Gaussian errors and another 200 times with Gaussian errors, 200 times for HNM with non-Gaussian errors and another 200 times with Gaussian errors. We therefore generated a total of 1000 independent datasets.

\begin{figure}
    \centering
    \includegraphics[scale=0.8]{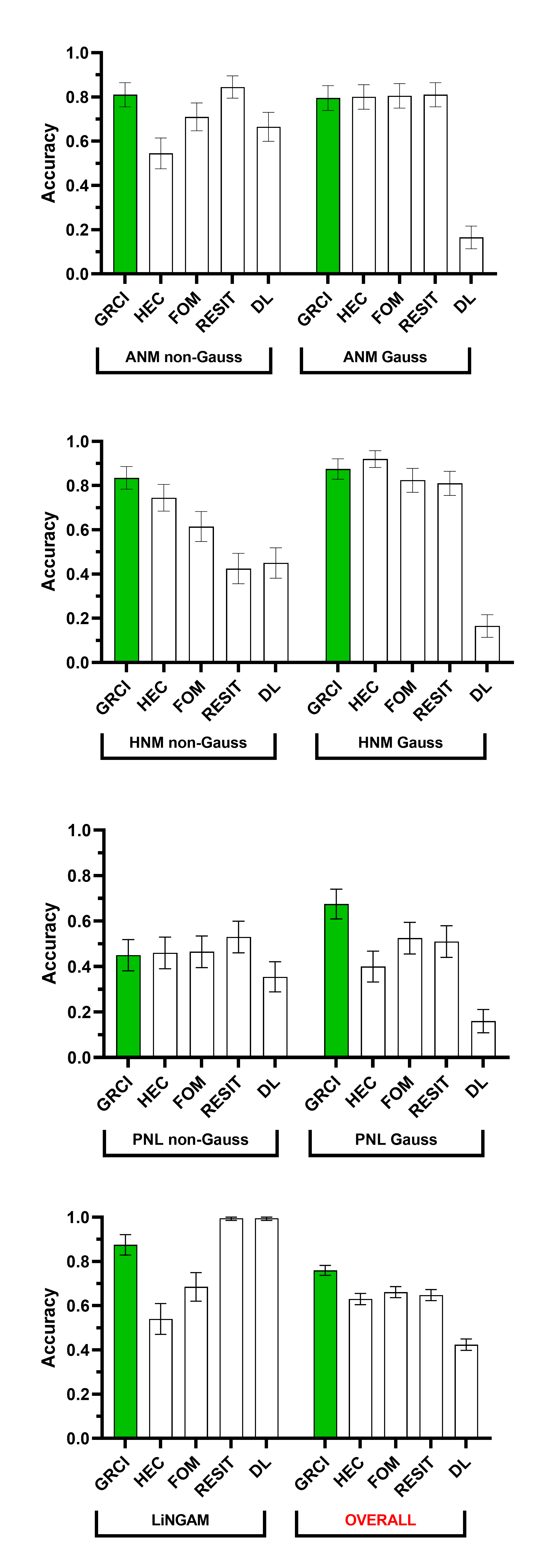}
    \caption{Results on causal direction with synthetic data under different conditions. Rows correspond to functional model and columns to Gaussianity. Error bars denote 95\% confidence intervals of the mean. GRCI performs relatively well across all conditions while the other algorithms only perform well in some cases.}
    \label{fig:synth_pair}
\end{figure}

We report the results in Figure \ref{fig:synth_pair}. LiNGAM, GRCI and RESIT performed well under LiNGAM. LiNGAM and RESIT outperformed GRCI in this case because they are specifically designed for the homoscedastic setting. Only GRCI and RESIT performed well under ANM with non-Gaussian errors because LiNGAM assumes linear conditional expectations. GRCI, HEC and FOM all performed equivalently with Gaussian error terms under both ANM and HNM. However, GRCI outperformed the other two -- sometimes by a very large margin -- with non-Gaussian errors. Recall that HEC and FOM make a variety of Gaussian approximations which unfortunately do not work well in the non-Gaussian setting. All algorithms performed poorly under PNL, but GRCI outperformed the others with Gaussian errors. Overall, GRCI achieved the best performance when averaged across all conditions. We conclude that GRCI maintains good performance across LiNGAM, ANM and HNM while other algorithms only perform well in special cases. Timing results are located in the Supplementary Materials; GRCI completed within 0.4 seconds on average.

\subsubsection{Real Data}

The T\"{u}bingen cause-effect pairs benchmark contains 108 datasets of real cause-effect pairs \cite{Mooij16}. We summarize the results for the 108 pairs in Figure \ref{fig:Tuebin}. As is standard in the literature, we exclude pairs containing multivariate vectors or binary variables; this includes pair numbers 47, 52-55, 70, 71, 105 and 107. We evaluate accuracy using the suggested weighted average in order to account for the potential bias introduced by pairs derived from the same multivariable dataset. The x-axis in Figure \ref{fig:Tuebin} corresponds to the cause-effect pair number (1-108), and the y-axis to the moving weighted accuracy. An ideal algorithm should achieve the highest weighted accuracy at any pair number. GRCI obtained an overall weighted accuracy of 81.6\%, as opposed to 71.2\% for FOM, 70.5\% for HEC, 64.0\% for RESIT and 51.5\% for LiNGAM. GRCI also maintained the best weighted accuracy at any pair number.  We conclude that GRCI accurately discovers causal direction using real data. In general, algorithms that account for heteroscedasticity (GRCI, FOM, HEC) perform better than those that only account for homoscedasticity (RESIT, LiNGAM), and algorithms that account for non-linear relations (GRCI, FOM, HEC, RESIT) perform better than those that only account for linear relations (LiNGAM). Timing results are located in the Supplementary Materials; GRCI completed within 5 seconds on average.

\begin{figure}
    \centering
    \includegraphics[scale=0.65]{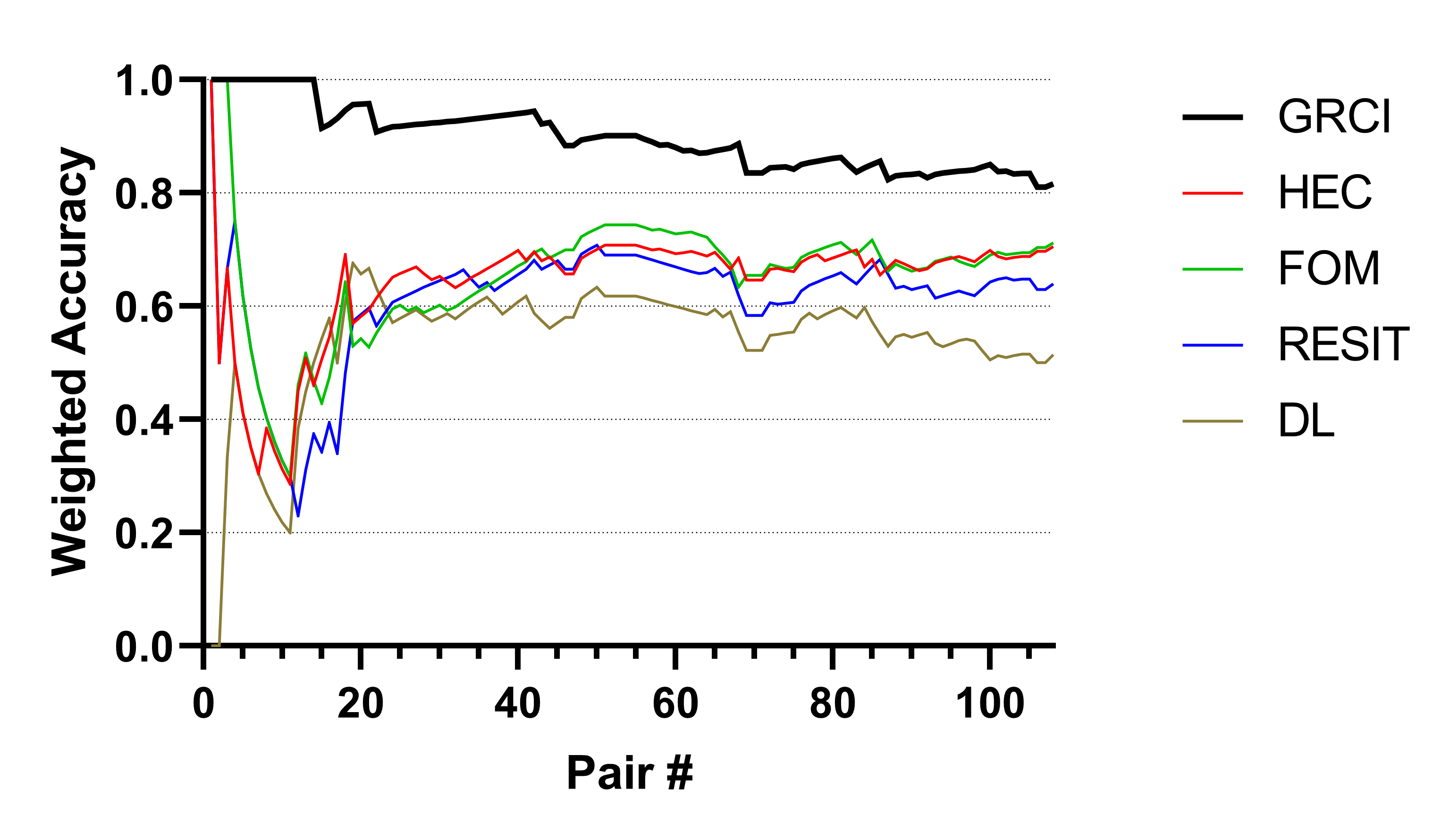}
    \caption{Results on the 108 T\"{u}bingen cause-effect pairs. GRCI maintains the highest weighted accuracy at any pair number.}
    \label{fig:Tuebin}
\end{figure}

\begin{tcolorbox}[breakable,enhanced,frame hidden]
Summarizing the results of the causal direction experiments:
\begin{enumerate}[leftmargin=*,label=(\arabic*)]
    \item GRCI maintains good performance across LiNGAM, ANM and HNM, regardless of whether the errors are Gaussian or not.
    \item HEC and FOM do not perform well when error terms deviate from Gaussianity.
    \item GRCI obtains and maintains the best performance with real data. 
\end{enumerate}
\end{tcolorbox}

\subsection{Root Causal Inference}

We next investigate the performance of GRCI in discovering patient-specific root causes of disease. We compare against four other algorithms:
\begin{enumerate}[label=(\arabic*)]
    \item Root Causal Inference (RCI): recovers patient-specific root causes assuming that the joint distribution obeys LiNGAM \cite{Strobl22}. 
    \item Prediction with ICA (ICA): runs ICA and then ranks the identified sources using a local variable importance measure of random forest \cite{Lasko19}. 
    \item Conditional Outliers (CO): learns a causal graph $\widehat{\mathbb{G}}$ and then identifies patient-specific root causes as conditional outliers according to the score $\frac{|X_i-m_i(\textnormal{Pa}_{\widehat{\mathbb{G}}}(X_i))|}{\sigma_i(\textnormal{Pa}_{\widehat{\mathbb{G}}}(X_i))}$ \cite{Janzing19}. 
    \item Model Substitution (MS): learns a causal graph and then identifies root causes of changes in the marginal distribution of $D$ by substituting causal conditional distributions into the joint distribution \cite{Budhathoki21}. 
\end{enumerate}
MS does not output sample-specific values, but we can still apply the population level values to each individual sample. 

GRCI, RCI and ICA recover the error terms directly without fully estimating the underlying DAG. However, CO and MS require a method for estimating the DAG. We tested RESIT and GDS as proposed in \cite{Peters14}, but they did not scale even after substituting a fast non-parametric conditional independence test \cite{Strobl19}. We therefore instead ran Steps \ref{alg_GRCI:PC} and \ref{alg_GRCI:error} of GRCI to recover a partial order. The parents of a variable must precede it in the partial order. We next ran Skeleton-Stable with conditioning sets restricted to preceding variables according to the partial order and then oriented directed edges according to the partial order. This process recovers a unique DAG. We finally ran Partial-Out to recover the error terms using the estimated DAG in order to ensure that both CO and MS also utilize HNM. We fixed the alpha threshold to 0.05 because it led to the best results in our experiments. 

Computing the ground truth Shapley values requires an exponential number of summations per Equation \eqref{eq_shap}. We therefore instead estimated the ground truth to negligible error by (1) feeding XGBoost fifty thousand samples of the \textit{ground truth} error terms and (2) running the TreeSHAP algorithm on the learned model. We reran all applicable algorithms (RCI and ICA) using XGBoost and TreeSHAP in order to prevent GRCI from achieving an unfair advantage due to possible biases introduced during ground truth estimation.

\subsubsection{Synthetic Data}

We generated data from a DAG with an expected neighborhood size of two, $\mathbb{E}(N)=2$. We assigned adjacencies using independent realizations of a Bernoulli$\Big(\frac{\mathbb{E}(N)}{p-1}\Big)$ random variable in an upper triangular matrix. We then replaced the binary variables twice with samples from $\textnormal{Uniform}([-1,-0.25] \cup [0.25, 1])$. Let $\beta^1$ denote the first resultant coefficient matrix, and $\beta^1_{ji}$ to the $j^\textnormal{th}$ row and $i^\textnormal{th}$ column; likewise for $\beta^2$. We generated the non-Gaussian error terms using the same procedure described in Section \ref{sec:pair_synth}. The HNM model corresponds to $X_i=f_i(\sum_{X_j \in \textnormal{Pa}(X_i)} X_j\beta^1_{ji}) + E_i g_i(\sum_{X_j \in \textnormal{Pa}(X_i)} X_j\beta^2_{ji})$ for each $X_i \in \bm{X}$ with functions $f_i,g_i$ drawn randomly as in Section \ref{sec:pair_synth}. We finally permuted the variable order. Repeating the above procedure 200 times for sample sizes of $n=500, 1000, 2000$ and dimensions $p=10, 30, 50$ generated a total of $200 \times 3 \times 3 = 1800$ datasets.

\textbf{Metrics. } Comparing the algorithms is not straightforward because the algorithms have different outputs. GRCI returns Shapley values for all of the variables. RCI returns Shapley values only for some of the variables, since it performs variable selection. ICA outputs sample-specific scores according to a random forest metric, but it can be modified to return Shapley values for all of the variables. MS outputs population level Shapley values, and CO outputs sample specific conditional outlier scores both only for some of the variables. We need a method that compares the algorithms on a common footing and accounts for outputs of different lengths. 

All algorithms fortunately can return a ranked list of variables. The top ranked variables ideally should correspond to the root causes with the largest effect on $D$. We therefore evaluated the algorithms using rank-biased overlap (RBO) \cite{Webber10}, a well-established metric that compares two ranked lists. Let $\mathcal{R}^k$, correspond to the ground truth ranking of the root causes for patient $k$ according to the true Shapley values. Similarly let $\widehat{\mathcal{R}}^k$ denote the estimate of the ranking given by an algorithm. The RBO corresponds to:
\begin{equation} \label{eq:RBO_shapley}
    \frac{1}{n} \sum_{k=1}^n \sum_{i=1}^{q_k} \widetilde{s}_i^k | \widehat{\mathcal{R}}_{1:i}^k \cap \mathcal{R}_{1:i}^k|/i,
\end{equation}
where $s_i^k$ denotes the true Shapley value of $X_i$ for patient $k$, $\widetilde{s}_i^k = \frac{s_i^k}{\sum_{i=1}^{q_k} s_i^k}$ the version normalized to sum to one, and $q_k$ the total number of root causes for patient $k$. RBO can compare ranked lists of potentially varying lengths and weighs top variables more heavily than bottom ones.
The metric takes values between zero and one; it equals one when the top ranked variables coincide exactly between the two lists, and zero when there is no overlap. A higher RBO is therefore better.

We focus primarily on the RBO because the algorithms output different variable importance measures. However, we also compute the mean squared error (MSE) to the proposed ground truth Shapley values as a secondary measure:
\begin{equation} \nonumber
    \frac{1}{nw} \sum_{k=1}^n \sum_{i=1}^p (\widehat{s}_i^k - s_i^k)^2.
\end{equation}
We set $\widehat{s}_i^k = 0$, if an algorithm does not output a score for variable $i$. An MSE of zero implies an RBO of one, but an algorithm can achieve a high RBO with a large MSE.

\textbf{Results. } We summarize the accuracy results with the synthetic data using RBO and MSE in Tables \ref{exp_synth:RBO} and \ref{exp_synth:MSE}, respectively. Recall that we implemented two versions of RCI and ICA - the original ones and the modified forms using TreeSHAP as labeled using the subscript $t$. We therefore compared GRCI against a total of six algorithms. Bolded values in each row of the tables correspond to the best performing algorithms according to paired two-tailed t-tests each at a Bonferonni corrected threshold of 0.05/6. 

\begin{table}[t]
\begin{subtable}{0.45\textwidth}  
\centering
\captionsetup{justification=centering,margin=2cm}
\begin{tabular}{cc|ccccccc}
\hhline{=========}
\textit{p} & \textit{n} & GRCI            & RCI   & RCI$_t$    & ICA & ICA$_t$ &CO    & MS            \\ \hline
10      & 500         & \textbf{0.735} & \textbf{0.706} & 0.690 & 0.579 & 0.639 & 0.616 &  0.508\\
     & 1000     & \textbf{0.773} & 0.699 & 0.689 & 0.603 & 0.669 & 0.623 & 0.502\\
     & 2000       & \textbf{0.809} & 0.710 & 0.708 & 0.614 & 0.695  & 0.631 & 0.503\\ \hline
30      & 500         & \textbf{0.653} & \textbf{0.622} & 0.616 & 0.477 & 0.519   & 0.496 & 0.392        \\
     & 1000         & \textbf{0.711} & 0.654 & 0.647 & 0.537 & 0.593    & 0.463 & 0.347\\
     & 2000         & \textbf{0.745} & 0.682 & 0.673 & 0.573 & 0.641   & 0.485 &  0.379\\ \hline
50      & 500         & \textbf{0.639} & 0.569 & 0.580 & 0.327 & 0.345    & 0.432 &  0.348      \\
     & 1000         & \textbf{0.685} & 0.613 & 0.609 & 0.506 & 0.556   & 0.402 & 0.338\\
     & 2000         & \textbf{0.741} & 0.642 & 0.636 & 0.555 & 0.615   & 0.383 &  0.311\\ 
\hhline{=========}
\end{tabular}
\caption{RBO} \label{exp_synth:RBO}
\end{subtable}

\vspace{5mm}\begin{subtable}{0.45\textwidth}  
\centering
\captionsetup{justification=centering,margin=2cm}
\begin{tabular}{cc|ccccccc}
\hhline{=========}
\textit{p} & \textit{n} & GRCI            & RCI   & ICA            \\ \hline
10      & 500         & \textbf{0.160} & 0.650 & 3.044  \\
     & 1000     & \textbf{0.113} & 0.659 & 3.362 \\
     & 2000       & \textbf{0.104} & 0.620 & 3.435 \\ \hline
30      & 500         & \textbf{0.183} & 0.756 & 3.455 \\
     & 1000         & \textbf{0.138} & 0.700 & 3.556  \\
     & 2000         & \textbf{0.111} & 0.635 & 3.355  \\ \hline
50      & 500         & \textbf{0.186} & 0.791 & 3.558  \\
     & 1000         & \textbf{0.170} & 0.702 & 3.632   \\
     & 2000         & \textbf{0.108} & 0.643 & 3.361  \\ 
\hhline{=========}
\end{tabular}
\caption{MSE} \label{exp_synth:MSE}
\end{subtable}

\vspace{5mm}\begin{subtable}{0.45\textwidth}  
\centering
\captionsetup{justification=centering,margin=2cm}
\begin{tabular}{cc|ccccccc}
\hhline{=========}
\textit{p} & \textit{n} & GRCI            & RCI   & RCI$_t$    & ICA & ICA$_t$ &CO    & MS           \\ \hline
10      & 500         & \cellcolor{Gray!25}1.613 & 0.003 & 0.651 & 0.182 & 0.776 & \cellcolor{Gray!25}1.208  &\cellcolor{Gray!25}1.248\\
     & 1000     & \cellcolor{Gray!25}4.075 & 0.004 & 0.832 & 0.404 & 1.073 & \cellcolor{Gray!25}3.585 &\cellcolor{Gray!25}3.686\\
     & 2000       & \cellcolor{Gray!25}13.85 & 0.009 & 1.186 & 0.914 & 1.659 & \cellcolor{Gray!25}13.43 &\cellcolor{Gray!25}13.95\\ \hline
30      & 500         & \cellcolor{Gray!25}9.199 & 0.011 & 0.720 & 0.383 & 1.075 &          \cellcolor{Gray!25}10.51 &\cellcolor{Gray!25}10.64\\
     & 1000         & \cellcolor{Gray!25}22.56 & 0.020 & 0.946 & 0.925 & 1.644 &  \cellcolor{Gray!25}24.40 & \cellcolor{Gray!25}24.80\\
     & 2000         &  \cellcolor{Gray!25}108.2 & 0.043 & 1.375 & 2.285 & 2.830 &  \cellcolor{Gray!25}111.5 & \cellcolor{Gray!25}113.6\\ \hline
50      & 500         & \cellcolor{Gray!25}32.90 & 0.033 & 0.850 & 0.650 & 1.477 &          \cellcolor{Gray!25}39.32 &\cellcolor{Gray!25}39.61\\
     & 1000         & 8\cellcolor{Gray!25}3.21 & 0.058 & 1.135 & 1.603 & 2.398 & \cellcolor{Gray!25}91.39 &  \cellcolor{Gray!25}92.28\\
     & 2000         & \cellcolor{Gray!25}222.0 & 0.125 & 1.806 & 4.145 & 4.708 &  \cellcolor{Gray!25}235.7 & \cellcolor{Gray!25}240.0\\ 
\hhline{=========}
\end{tabular}
\caption{Time in seconds} \label{exp_synth:time}
\end{subtable}
\caption{GRCI obtains the highest mean RBO values in (a) and lowest mean MSE values in (b) in every situation tested with the synthetic data. All HNM-based algorithms take approximately the same amount of time to complete as highlighted in gray in (c). } 
\end{table}

GRCI achieved the highest mean RBO in every situation (Table \ref{exp_synth:RBO}). The original version of RCI came in second place and TreeSHAP did not improve its performance. TreeSHAP improved ICA, but both versions of ICA performed much worse than GRCI and RCI. ICA frequently got stuck in local optima as evidenced by the terribly inaccurate error values when compared to RCI (Table \ref{exp_synth:MSE}). GRCI recovered the error terms about two to six times more accurately than RCI. MS and CO had the worst performances because the algorithms either recovered conditional outliers that did not induce disease or failed to output sample-specific scores. We conclude that GRCI performs the most accurately across all tested sample sizes, dimensions and metrics even after incorporating TreeSHAP into applicable alternatives. Similar results held with RBO under the PNL model, but GRCI obtained a worse MSE (Table  \ref{exp_PNL} in the Appendix).

We summarize timing results in Table \ref{exp_synth:time}. Algorithms that search over the space of HNMs -- including GRCI, CO and MS highlighted in light gray -- take about the same amount of time. These methods also expectedly take longer than the linear algorithms RCI and ICA.

\subsubsection{Real Data} We compared all seven algorithms on their ability to discover patient-specific root causes using two real datasets. Note that we do not have access to the ground truth Shapley values with real data, so we use the following modified RBO metric:
\begin{equation} \nonumber
    \frac{1}{n} \sum_{k=1}^n \sum_{i=1}^{q_k} \frac{1}{q_k} | \widehat{\mathcal{R}}_{1:i}^k \cap \mathcal{R}_{1:i}^k|/i,
\end{equation}
where we no longer weight the score by Shapley values.

\begin{figure}
\begin{subfigure}{0.45\textwidth}
    \centering
    \includegraphics[scale=0.6]{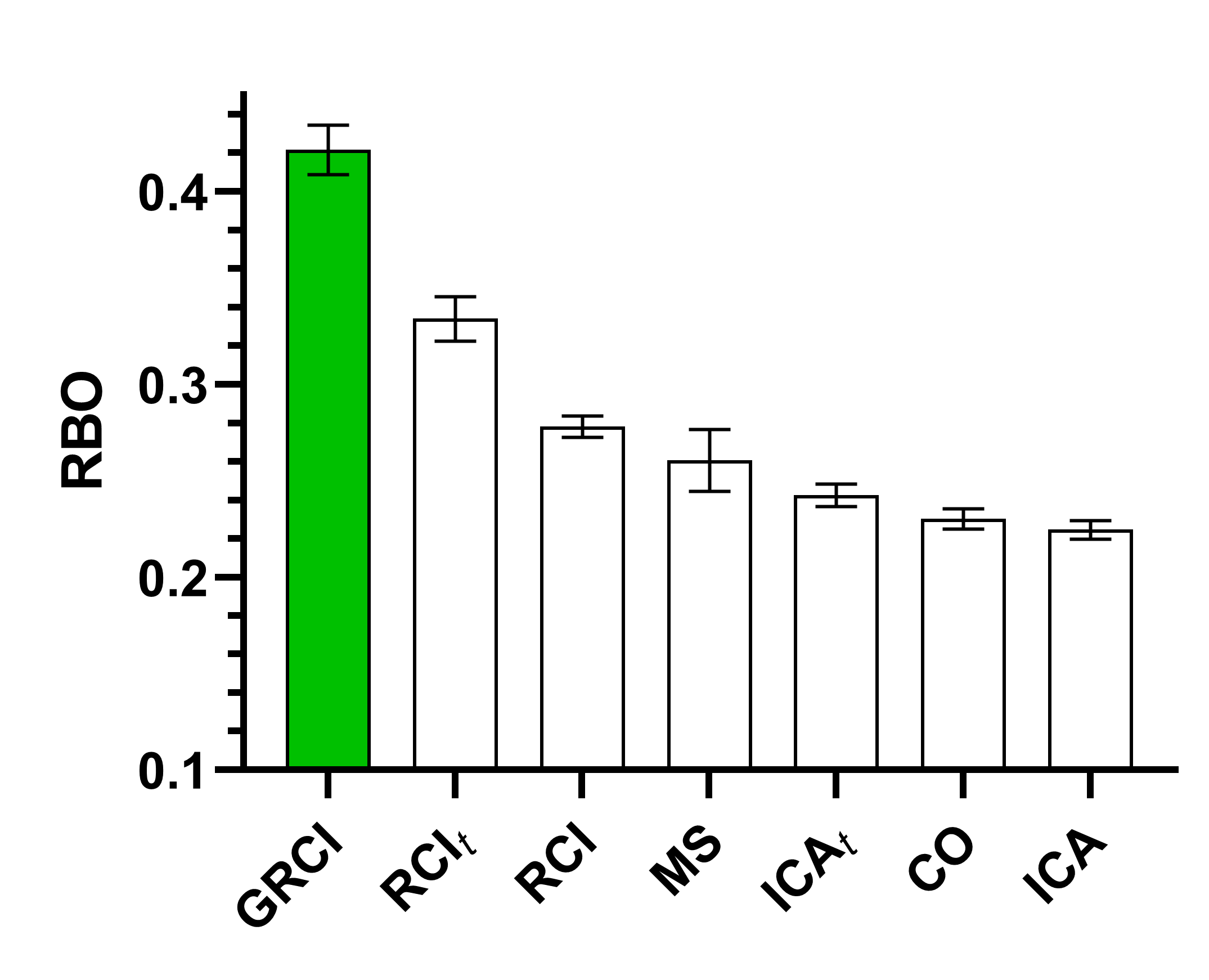}
    \caption{Primary Biliary Cirrhosis}
    \label{fig:real_PBC}
\end{subfigure}

\begin{subfigure}{0.45\textwidth}
    \centering
    \includegraphics[scale=0.6]{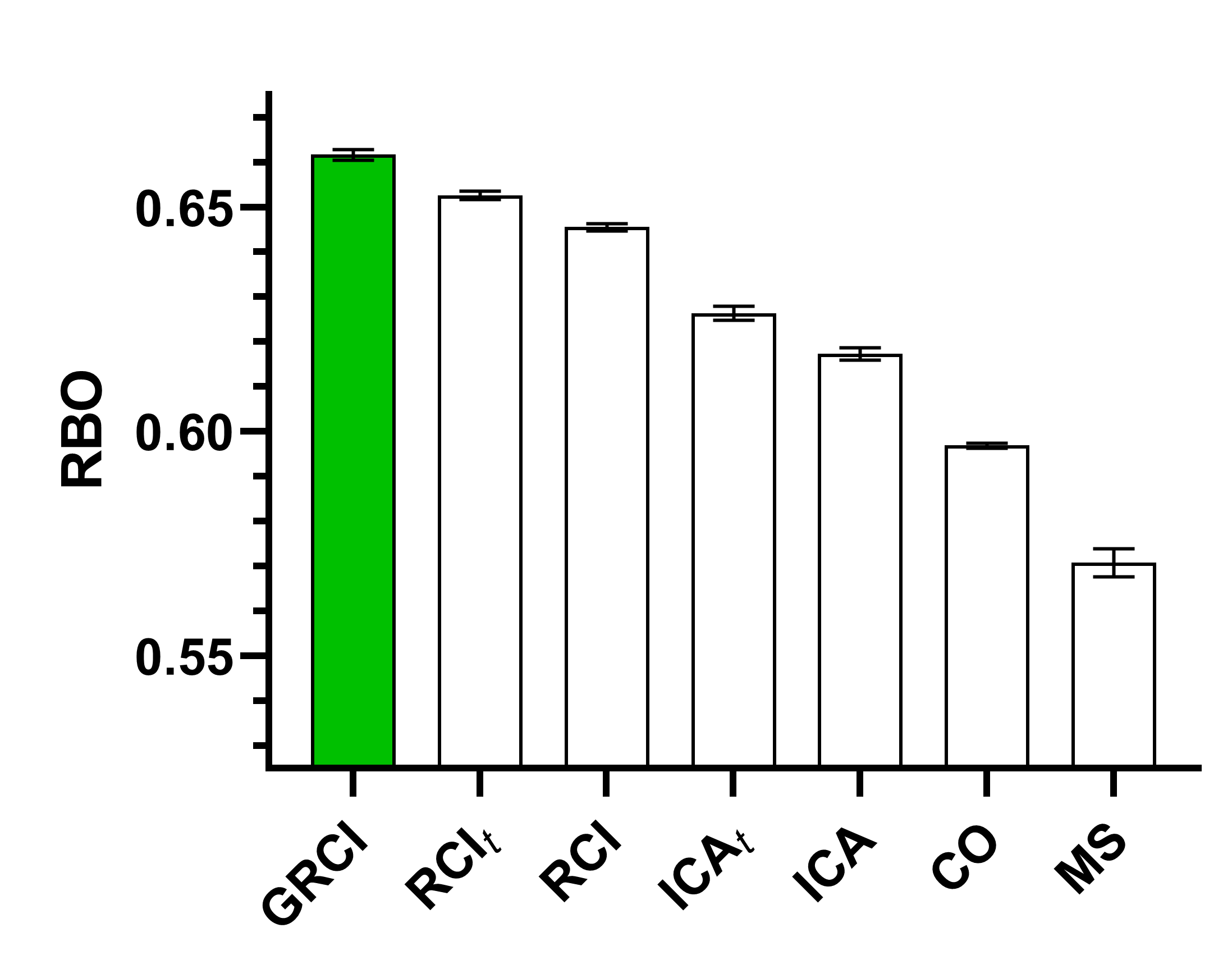}
    \caption{Pima Indians Diabetes}
    \label{fig:real_Diabetes}
\end{subfigure}
\caption{Sorted accuracy results with the real datasets.}
\end{figure}

\textbf{Primary Biliary Cholangitis. } 
The Mayo Clinic Primary Biliary Cholangitis (PBC) dataset contains samples from 258 patients with PBC who entered into a randomized clinical trial assessing the effects of medication called D-penicillamine \cite{Fleming11}. PBC is an autoimmune disease that slowly destroys the small bile ducts of the liver, eventually causing liver cirrhosis, liver decompensation and then death \cite{Hirschfield13}. The dataset contains the following continuous variables: age, bilirubin, albumin, alkaline phosphatase, copper, cholesterol, platelets, AST and pro-thrombin time.

We sought to identify the patient-specific root causes of mortality. We know that age and bilirubin cause death because older patients pass away and increased bilirubin leads to neurotoxicity \cite{Lopez14}. Intervening on the other variables does not consistently change mortality, so they are likely non-ancestors of death. High levels of bilirubin increase the frequency of death more than old age. We set the gold standard ranking as bilirubin then age if the bilirubin is at or above 2 mg/dL -- in accordance with the classic Child-Turcotte cut-off  \cite{Child64} -- and age then bilirubin otherwise.

We ran the algorithms on 1000 bootstrapped draws of the dataset. We report accuracy results among patients who passed in Figure \ref{fig:real_PBC}. GRCI achieved the best accuracy compared to all other methods. RCI came in second place in accordance with the synthetic data results. GRCI took 8 seconds on average (see the Supplementary Materials for full timing results).

\textbf{Pima Indians Diabetes. } The Pima Indians Diabetes Database is a observational dataset containing samples from females in the Pima Indian population near Pheonix, Arizona \cite{Smith88}. The dataset contains the following variables: number of pregnancies, plasma glucose concentration at two hours in an oral glucose tolerance test, diastolic blood pressure, triceps skinfold thickness, two-hour serum insulin, body mass index, diabetes pedigree function, age, and presence or absence of diabetes. 

We sought to identify the patient-specific root causes of diabetes. Recall that the incidence of diabetes increases with age, and clinicians can diagnose diabetes if the blood glucose reaches at least 200 mg/dL with a two hour oral glucose tolerance test. We therefore set the gold standard as age and glucose ranked according to their z-score in decreasing order.

We ran the algorithms again using 1000 bootstrapped draws. We reports the results for patients with diabetes in Figure \ref{fig:real_Diabetes}. GRCI again achieved the best accuracy compared to all other methods. The results with the Pima Indians Diabetes Database also mimic those seen with the PBC and synthetic datasets. GRCI took 15.2 seconds on average (Supplementary Materials). 

\begin{tcolorbox}[breakable,enhanced,frame hidden]
Summarizing the results of the patient-specific root causal inference experiments:
\begin{enumerate}[leftmargin=*,label=(\arabic*)]
    \item GRCI achieves the best performance -- in terms of both RBO and MSE -- across all sample sizes and dimensions with the synthetic data.
    \item GRCI also achieves the best performance in two real datasets with known root causes, and the real data results mimic the synthetic ones.
    \item GRCI, MS and CO take longer than the linear algorithms but still complete within about 4 minutes on average with $n=2000, p=50$.
\end{enumerate}
\end{tcolorbox}

\section{Conclusion}

We presented GRCI, the first method that generalizes the original RCI algorithm to the non-linear setting. GRCI accommodates both non-linear expectations and heteroscedastic noise under HNM. We proved identifiability of HNM in general and described a procedure that partials out both the conditional mean and MAD in a two-step regression process. We then defined patient-specific root causes using Shapley values of models predicting a diagnosis from the error terms. We introduced GRCI as an efficient method that recovers the errors by combining error extraction in functional causal models with constraint-based skeleton discovery. Experiments with both synthetic and real data highlighted considerable improvements in accurately recovering both causal direction and patient-specific root causes of disease. GRCI even outperformed other methods based on HNM engineered specifically for causal direction because GRCI does not make any Gaussian approximations. 

Experience with the GRCI algorithm however suggests several areas for improvement. First, GRCI is significantly slower than RCI both in terms of sample size and number of variables. Second, GRCI performs well under HNM but did not recover the Shapley values accurately under PNL. These results imply that the algorithm is sensitive to deviations from HNM. Finally, GRCI assumes no latent confounding, but confounders frequently exists in real data. We are not aware of a root causal contribution score that can handle confounding when investigators do not have access to the true causal graph and error term distributions. Future work could therefore improve the scalability and robustness of GRCI even in the presence of latent confounding.

\bibliographystyle{vancouver}
\bibliography{amia}  

\section*{Supplementary Materials}

\subsection*{Proofs}
\begin{reptheorem}{thm_DE}
Assume the forward model $X \rightarrow Y$ obeys HNM so that $p(x,y) = p\big(\frac{y-m(x)}{\sigma(x)}\big) p(x)$ with $m(X)$ and $\sigma(X)$ once differentiable. If there is a backward model $Y \rightarrow X$ also obeying HNM so that $p(x,y) = p\big(\frac{x-n(y)}{t(y)}\big) p(y)$, then the following differential equation holds:
\begin{equation} \nonumber
\begin{aligned}
     &-\frac{\sigma(x)}{Q(x,y)}\frac{\partial ^2}{\partial y \partial x}r(x,y)-\frac{\partial ^2}{\partial y^2}r(x,y) -\\ &\frac{\sigma^\prime(x)}{Q(x,y)}\frac{\partial }{\partial y}r(x,y)  = q^{\prime \prime}(y) + \frac{\sigma^\prime(x)}{Q(x,y)} q^\prime(y),
    \end{aligned}
\end{equation} 
where $r(x,y) = \textnormal{log } p\big(\frac{x-n(y)}{t(y)}\big), q(y) = \textnormal{log } p(y)$ both twice differentiable and $Q(x,y) = \sigma(x)m^\prime(x) + (y-m(x))\sigma^\prime(x)$. Moreover, if there exists a quadruple $(x_0,m(x_0),\sigma(x_0),p(x_0|y))$ such that $Q(x_0,y) \not = 0$ for all but countably many $y$, then $p_Y$ is completely determined by $(y_0, q^\prime(y_0))$ -- i.e., the set of all $p_Y$ satisfying the differential equation is contained in a two dimensional affine space.
\end{reptheorem}
\begin{proof}
We first derive the differential equation. Let $\pi(x,y) = \textnormal{log } p(x,y)$. The forward model allows us to write:
\begin{equation} \nonumber
    \begin{aligned}
        \frac{\partial \pi(x,y)}{ \partial y} &= \frac{ q^\prime\Big(\frac{y-m(x)}{\sigma(x)} \Big)}{\sigma(x)}\\
         \frac{\partial^2 \pi(x,y)}{ \partial y^2} &= \frac{q^{\prime\prime}\Big(\frac{y-m(x)}{\sigma(x)} \Big)}{\sigma^2(x)}\\
         \frac{\partial^2 \pi(x,y)}{ \partial y \partial x} &= \frac{-\overbrace{[\sigma(x)m^\prime(x) + (y-m(x))\sigma^\prime(x)]}^{=Q(x,y)}_{}\frac{\partial^2}{ \partial y^2} \pi(x,y)}{\sigma(x)}\\ &- \frac{\sigma^\prime(x) \frac{\partial}{ \partial y}\pi(x,y)}{\sigma(x)}.
    \end{aligned}
\end{equation}
We can also write $\frac{\partial}{ \partial y} \pi(x,y) = \frac{\partial}{ \partial y} r(x,y) + q^\prime(y)$ and likewise $\frac{\partial^2}{ \partial y^2} \pi(x,y) = \frac{\partial^2}{ \partial y^2} r(x,y) + q^{\prime\prime}(y)$ by the backward model. Observe $\frac{\partial^2 }{ \partial y \partial x} \pi(x,y)= \frac{\partial^2 }{ \partial y \partial x} r(x,y)$. Hence we have:
\begin{equation} \nonumber
\begin{aligned}
    \frac{\partial^2 r(x,y)}{ \partial y \partial x} &= \frac{-Q(x,y)\big(\frac{\partial^2}{ \partial y^2} r(x,y)+ q^{\prime\prime}(y)\big)}{\sigma(x)}\\ &- \frac{\sigma^\prime(x) \big(\frac{\partial}{ \partial y} r(x,y) + q^\prime(y) \big)}{\sigma(x)}.
\end{aligned}
\end{equation}
Rearranging the above equation leads to Equation \eqref{eq_DE}.

We now prove the second statement. Let $z(y) = q^{\prime}(y)$, $G(x,y) = -\frac{\sigma^\prime(x)}{Q(x,y)}$ and:
\begin{equation} \nonumber
\begin{aligned}
   H(x,y) = &\hspace{1mm}-\frac{\sigma(x)}{Q(x,y)}\frac{\partial ^2}{\partial x \partial y}r(x,y)-\frac{\partial ^2}{\partial y^2}r(x,y)\\ &- \frac{\sigma^\prime(x)}{Q(x,y)}\frac{\partial }{\partial y}r(x,y).     
\end{aligned}
\end{equation}
We may then write:
\begin{equation}  \nonumber
    z^\prime(y) = z(y)G(x,y) + H(x,y).
\end{equation}
Solving this linear differential equation gives:
\begin{equation}  \label{eq_expDE}
\begin{aligned}
    z(y) = c\textnormal{e}^{ \int G(x,\widetilde{y})~d\widetilde{y}} + &\textnormal{e}^{ \int G(x,\widetilde{y})~d\widetilde{y}} \times\\ 
    &\int \textnormal{e}^{ -\int G(x,\widetilde{y})~d\widetilde{y}} H(x,\widehat{y})~d\widehat{y},
\end{aligned}
\end{equation}
for some arbitrary constant $c$. Now fix $(x_0,m(x_0),\sigma(x_0),p(x_0|y))$. Then the function $z(y)$ is determined for all $y$ by \textcolor{blue}{$y_0$} and \textcolor{blue}{$z(y_0)$}, so long as $Q(x_0,y) \not = 0$ for all but countably many $y$, because we can use $y_0$ and $z(y_0)$ to find the value of $c$ (i.e., the initial condition of the solution). We can then recover $q(y)$ for all $y$ by integration (the constant follows by normalization). Thus, the set of all functions $p_Y$ satisfying Equation \eqref{eq_DE} is completely determined by \textcolor{blue}{$y_0$} and \textcolor{blue}{$q^{\prime} (y_0)$} -- a two dimensional affine space. 
\end{proof}

\begin{reptheorem}{thm:k_py}
Consider the same assumptions as Theorem \ref{thm_DE}. If both the forward and backward models follow HNM, then we have:
\begin{equation} \nonumber \label{eq_alg_MI}
\begin{aligned}
       &I(p_Y:p_{X|Y})\\ &\stackrel{+}{\geq} K(p_Y) - \inf_{(x_0,y_0)} K(x_0,m(x_0),\sigma(x_0),y_0,q^\prime(y_0)), 
\end{aligned}
\end{equation}
assuming of course that all inputs are computable.
\end{reptheorem}
\begin{proof}
Equation \eqref{eq_expDE} implies that $q^\prime(y)$ is completely determined by $(x_0,m(x_0),\sigma(x_0),y_0,q^\prime(y_0))$ given $p_{X|Y}$. We can therefore write: $K(q^\prime(y)|p_{X|Y}) \stackrel{+}{\leq} K(x_0,m(x_0),\sigma(x_0),y_0,q^\prime(y_0)|p_{X|Y})$. This holds for arbitrary $(x_0, y_0)$, so we more specifically have: $$K(q^\prime(y)|p_{X|Y}) \stackrel{+}{\leq} \inf_{(x_0,y_0)} K(x_0,m(x_0),\sigma(x_0),y_0,q^\prime(y_0)|p_{X|Y}).$$
 Note that we can recover $p_Y$ from $q^\prime$ by integration. The constant is determined by the normalization of a density. We can now write: 
\begin{equation} \nonumber
\begin{aligned}
&I(p_Y : p_{X|Y}) = K(p_Y) - K(p_Y | p_{X|Y}^*)\\ &\stackrel{+}{\geq} K(p_Y) - K(p_Y | p_{X|Y})\\
&\stackrel{+}{\geq} K(p_Y) - \inf_{(x_0,y_0)} K(x_0,m(x_0),\sigma(x_0),y_0,q^\prime(y_0)|p_{X|Y})\\
&\stackrel{+}{\geq} K(p_Y) - \inf_{(x_0,y_0)} K(x_0,m(x_0),\sigma(x_0),y_0,q^\prime(y_0)),
\end{aligned}
\end{equation}
whence the conclusion holds. 
\end{proof}

\begin{reptheorem}{thm:full}
Assume Equation \eqref{eq_HNM} is a restricted HNM according to $\mathbb{G}$. Then, $\mathbb{G}$ is uniquely identified from $\mathcal{G}$.
\end{reptheorem}
\begin{proof}
Assume that there exists another restricted HNM with graph $\underline{\mathbb{G}}$. We will show that $\mathbb{G} = \underline{\mathbb{G}}$ for any $\underline{\mathbb{G}} \in \mathcal{G}$. Assume $\mathbb{G} \not = \underline{\mathbb{G}}$. Since causal minimality holds, there must exist a directed edge $X \rightarrow Y$ in $\mathbb{G}$, and the directed edge $X \leftarrow Y$ in $\underline{\mathbb{G}}$.

Let $\bm{Q}  = \textnormal{Pa}(Y) \setminus X$ and $\bm{R}  = \textnormal{Pa}_{\underline{\mathbb{G}}}(X) \setminus Y$. Set $\bm{S} = \bm{Q} \cup \bm{R}$. Consider $\bm{S} = \bm{s}$ with $p(\bm{s})>0$. Let $X^* = (X|\bm{S} = \bm{s})$ and $Y^* = (Y|\bm{S} = \bm{s})$. Note that $\varepsilon_Y$ and $X,\bm{S}$ are d-separated in $\mathbb{G}$, so $\varepsilon_Y \ci (X,\bm{S})$ by the global Markov property. Similarly, $\varepsilon_X$ and  $Y,\bm{S}$ are d-separated in $\underline{\mathbb{G}}$, so $\varepsilon_X \ci (Y,\bm{S})$. Peters and colleagues showed that $g(X^*,\bm{q}, \varepsilon_Y) \stackrel{d}{=} g(X,\bm{Q},\varepsilon_Y)|\bm{s}$ for any measurable function $g$ in their Lemma 36 so long as $p(\bm{s})>0$ (and likewise for $Y^*$) \cite{Peters14}. Applying this result gives the bivariate model:
\begin{equation} \nonumber
\begin{aligned}
    Y^* &= m_Y(\bm{q},X^*) + \varepsilon_Y \sigma_Y(\bm{q},X^*) \textnormal{ with } \varepsilon_Y \ci X^* \textnormal{ in } \mathbb{G},\\
    X^* &= m_X(\bm{r},Y^*) + \varepsilon_X \sigma_X(\bm{r},Y^*) \textnormal{ with } \varepsilon_X \ci Y^* \textnormal{ in } \underline{\mathbb{G}}.
\end{aligned}
\end{equation}
But we chose $\bm{s}$ such that $p(x,y|\bm{s})$ does not satisfy Equation \eqref{eq_DE} -- a contradiction of Theorem \ref{thm_DE}. 
\end{proof}

\newpage
\subsection*{Additional Results}

\begin{figure}[h]
\begin{subfigure}{0.45\textwidth}
    \centering
    \includegraphics[scale=0.6]{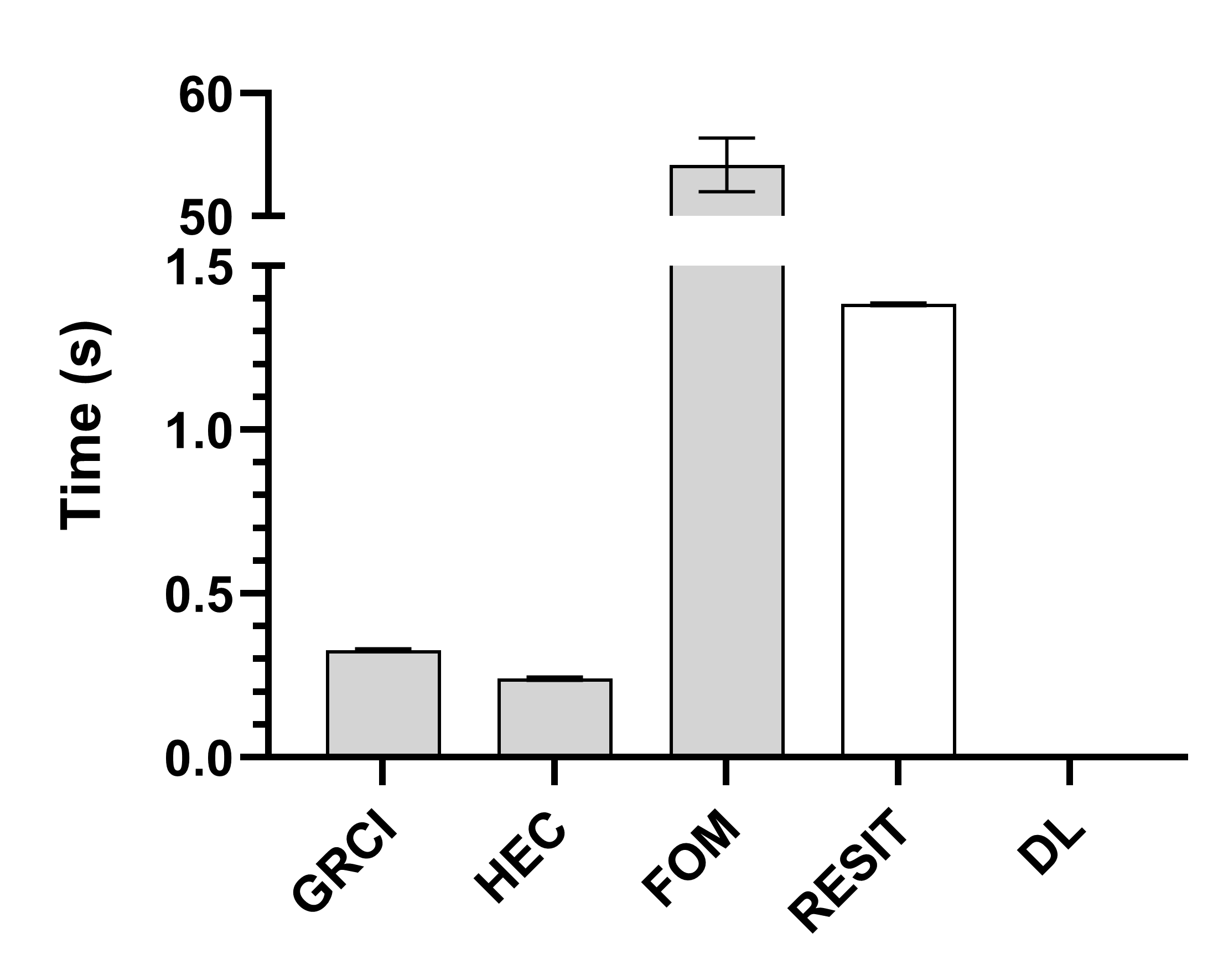}
    \caption{Synthetic pairs}
    \label{fig:pairs_synth_time}
\end{subfigure}

\begin{subfigure}{0.45\textwidth}
    \centering
    \includegraphics[scale=0.6]{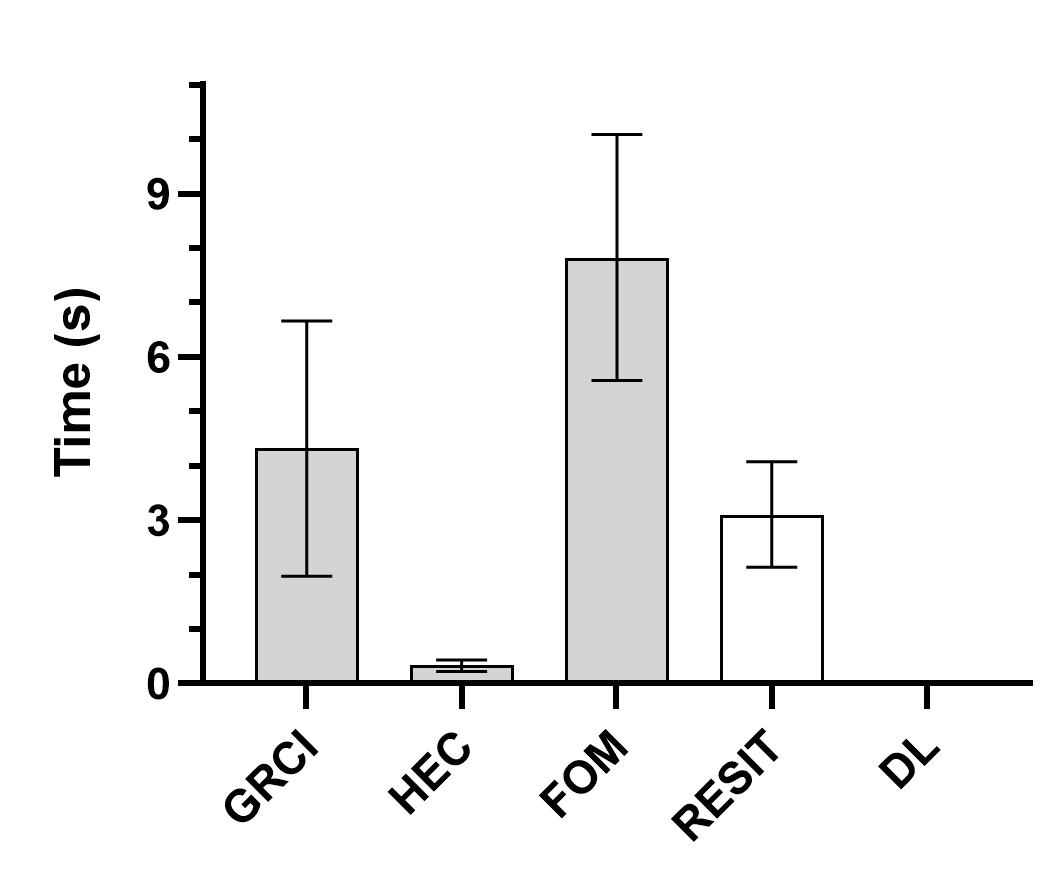}
    \caption{T\"{u}bingen cause-effect pairs}
    \label{fig:pairs_Tuben}
\end{subfigure}
\caption{Timing results for the causal direction experiments. HNM-based methods are highlighted in gray. GRCI takes longer than RESIT in (b) because we capped the RESIT sample size at 3000 due to its scaling issues.}
\end{figure}

\begin{table}[]
\begin{subtable}{0.45\textwidth}  
\begin{tabular}{cc|ccccccc}
\hhline{=========}
\textit{p}          & \textit{n} & GRCI           & RCI   & RCIt  & ICA   & ICAt  & CO    & MS    \\ \hline
10 & 500        & \textbf{0.740} & 0.671 & 0.677 & 0.632 & 0.677 & 0.598 & 0.490 \\
                    & 1000       & \textbf{0.760} & 0.685 & 0.689 & 0.657 & 0.689 & 0.601 & 0.486 \\
                    & 2000       & \textbf{0.783} & 0.700 & 0.721 & 0.667 & 0.721 & 0.590 & 0.458 \\ \hline
30 & 500        & \textbf{0.678} & 0.630 & 0.641 & 0.542 & 0.641 & 0.482 & 0.385 \\
                    & 1000       & \textbf{0.721} & 0.667 & 0.681 & 0.624 & 0.681 & 0.466 & 0.397 \\
                    & 2000       & \textbf{0.746} & 0.689 & 0.709 & 0.653 & 0.709 & 0.480 & 0.391 \\ \hline
50 & 500        & \textbf{0.664} & 0.595 & 0.614 & 0.293 & 0.614 & 0.394 & 0.332 \\
                    & 1000       & \textbf{0.707} & 0.630 & 0.650 & 0.542 & 0.650 & 0.390 & 0.329 \\
                    & 2000       & \textbf{0.741} & 0.657 & 0.678 & 0.635 & 0.678 & 0.378 & 0.296\\
                    \hhline{=========}
\end{tabular}

\caption{RBO} \label{exp_PNL:RBO}
\end{subtable}

\vspace{5mm}\begin{subtable}{0.45\textwidth}  
\centering
\captionsetup{justification=centering,margin=2cm}
\begin{tabular}{cc|ccccccc}
\hhline{=========}
\textit{p} & \textit{n} & GRCI            & RCI   & ICA            \\ \hline
10      & 500         & \textbf{0.219} & 0.756 & 3.262  \\
     & 1000     & \textbf{0.223} & 0.714 & 2.947 \\
     & 2000       & \textbf{0.213} & 0.671 & 3.431 \\ \hline
30      & 500         & \textbf{0.268} & 0.717 & 3.302 \\
     & 1000         & \textbf{0.249} & 0.675 & 3.365  \\
     & 2000         & \textbf{0.251} & 0.643 & 3.329  \\ \hline
50      & 500         & \textbf{0.277} & 0.779 & 3.536  \\
     & 1000         & \textbf{0.240} & 0.739 & 3.473   \\
     & 2000         & \textbf{0.228} & 0.685 & 3.189  \\ 
\hhline{=========}
\end{tabular}
\caption{MSE} \label{exp_PNL:MSE}
\end{subtable}

\vspace{5mm}\begin{subtable}{0.45\textwidth}  
\centering
\captionsetup{justification=centering,margin=2cm}
\begin{tabular}{cc|ccccccc}
\hhline{=========}
\textit{p} & \textit{n} & GRCI            & RCI   & RCI$_t$    & ICA & ICA$_t$ &CO    & MS           \\ \hline
10      & 500         & \cellcolor{Gray!25}1.851 & 0.003 & 0.584 & 0.262 & 0.762 & \cellcolor{Gray!25}1.598  &\cellcolor{Gray!25}1.562\\
     & 1000     & \cellcolor{Gray!25}4.943 & 0.008 & 0.860 & 0.598 & 1.202 & \cellcolor{Gray!25}4.733 &\cellcolor{Gray!25}4.614\\
     & 2000       & \cellcolor{Gray!25}16.97 & 0.014 & 1.398 & 1.359 & 2.057 & \cellcolor{Gray!25}17.74 &\cellcolor{Gray!25}17.05\\ \hline
30      & 500         & \cellcolor{Gray!25}12.84 & 0.018 & 0.709 & 0.618 & 1.312 &          \cellcolor{Gray!25}15.17 &\cellcolor{Gray!25}15.00\\
     & 1000         & \cellcolor{Gray!25}31.75 & 0.031 & 1.066 & 1.463 & 2.230 &  \cellcolor{Gray!25}35.03 & \cellcolor{Gray!25}34.48\\
     & 2000         &  \cellcolor{Gray!25}223.9 & 0.060 & 1.766 & 3.533 & 4.129 &  \cellcolor{Gray!25}230.8 & \cellcolor{Gray!25}227.9\\ \hline
50      & 500         & \cellcolor{Gray!25}36.77 & 0.044 & 0.788 & 0.650 & 1.725 &          \cellcolor{Gray!25}44.24 &\cellcolor{Gray!25}43.89\\
     & 1000         & \cellcolor{Gray!25}85.59 & 0.077 & 1.245 & 1.603 & 3.302 & \cellcolor{Gray!25}95.99 &  \cellcolor{Gray!25}94.93\\
     & 2000         & \cellcolor{Gray!25}255.2 & 0.161 & 2.233 & 4.145 & 6.234 &  \cellcolor{Gray!25}273.63 & \cellcolor{Gray!25}268.7\\ 
\hhline{=========}
\end{tabular}
\caption{Time in seconds} \label{exp_PNL:time}
\end{subtable}
\caption{Accuracy and timing results with the PNL model. The algorithms achieved average RBO values comparable to HNM, but GRCI obtained substantially worse average MSE.} \label{exp_PNL}
\end{table}

\begin{figure}[h]
\begin{subfigure}{0.45\textwidth}
    \centering
    \includegraphics[scale=0.6]{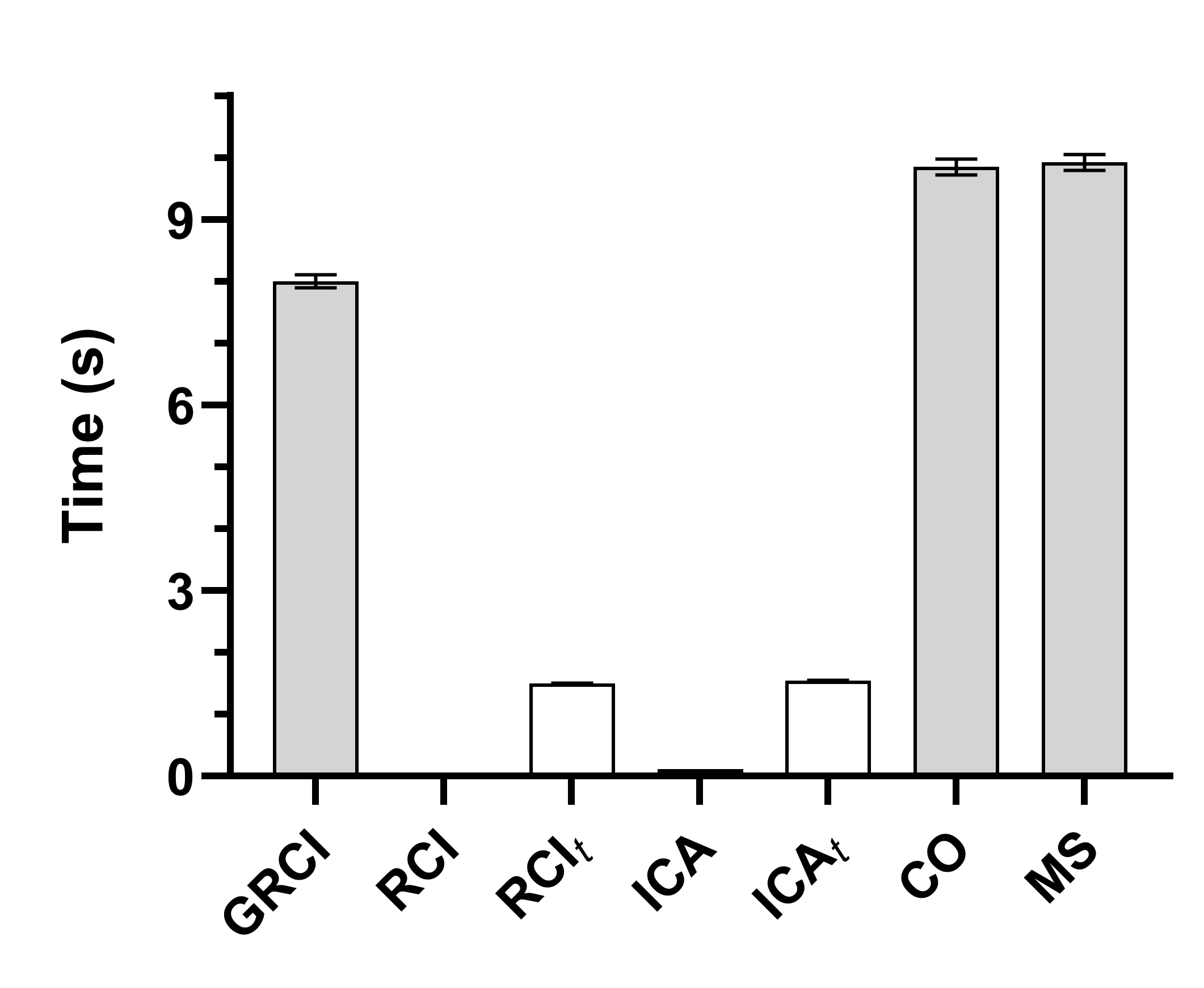}
    \caption{Primary Biliary Cirrhosis}
    \label{fig:real_PBC_time}
\end{subfigure}

\begin{subfigure}{0.45\textwidth}
    \centering
    \includegraphics[scale=0.6]{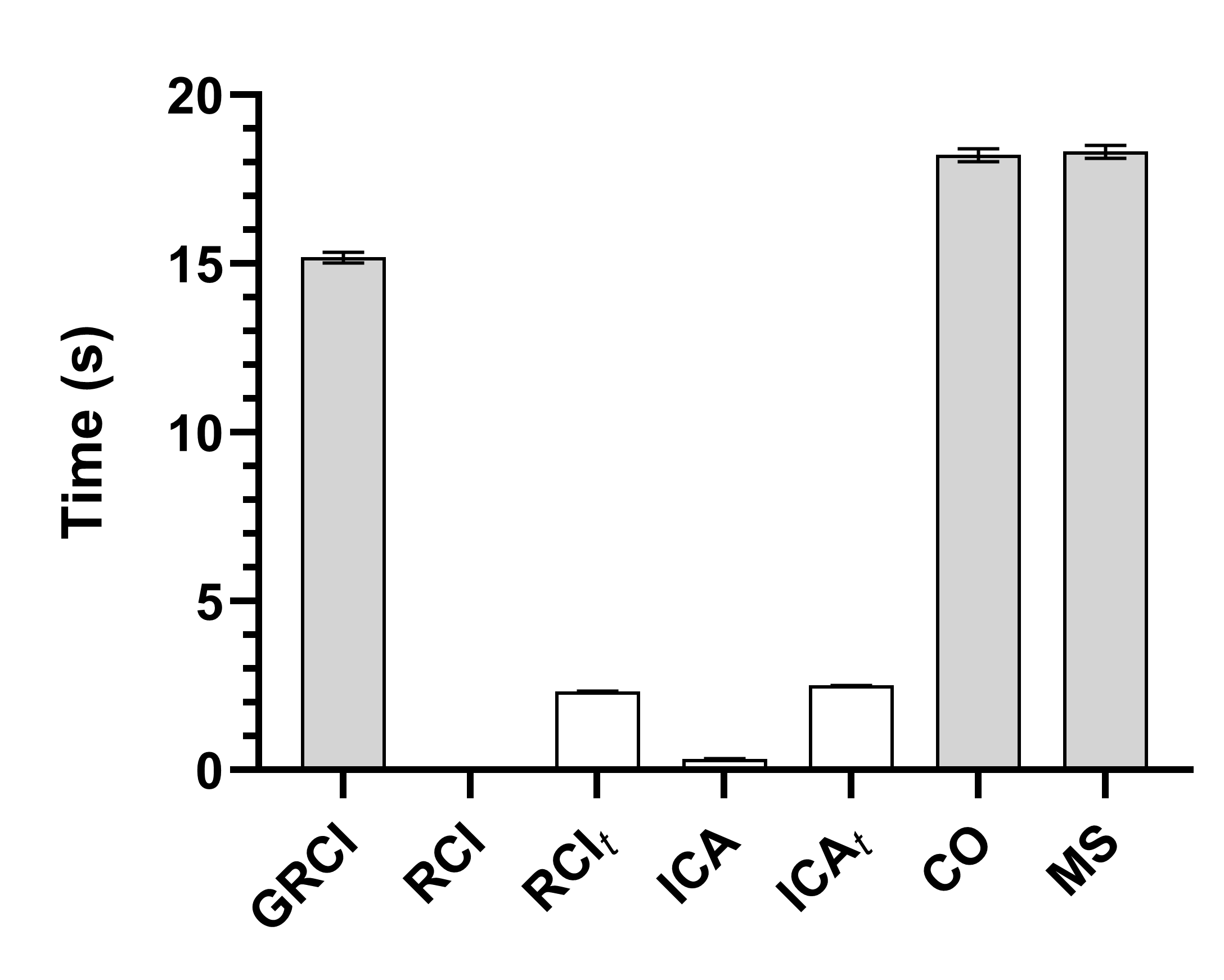}
    \caption{Pima Indians Diabetes}
    \label{fig:real_Diabetes_time}
\end{subfigure}
\caption{Timing results for the real datasets.}
\end{figure}

\end{document}